\DeclareMathOperator*{\argmin}{arg\,min}
\newtheorem{remark}{Remark}[section]
\newtheorem{proposition}{Proposition}[section]
\newtheorem{definition}{Definition}[section]
\newtheorem{theorem}{Theorem}[section]
\newtheorem{hypothesis}{Hypothesis}[section]
\newtheorem{example}{Example}[section]
\begin{document}

\begin{frontmatter}


\title{Estimating Dataset Dimension via Singular Metrics under the Manifold
Hypothesis:
Application to Inverse Problems}

\author{Paola Causin\corref{cor1}}
\ead{paola.causin@unimi.it}
\author{Alessio Marta}
\affiliation{organization={Dept. of Mathematics, University of Milan},
           addressline={via Saldini 50}, 
            city={Milano},
            postcode={20133}, 
            country={Italy}}

\begin{abstract}
High-dimensional datasets often exhibit low-dimensional geometric structures, as suggested by the manifold hypothesis, which implies that data lie on a smooth manifold embedded in a higher-dimensional ambient space. While this insight underpins many advances in machine learning and inverse problems, fully leveraging it requires to deal with three key tasks: estimating the intrinsic dimension (ID) of the manifold, constructing appropriate local coordinates, and learning mappings between ambient and manifold spaces. In this work, we propose a  framework that addresses all these challenges using a Mixture of Variational Autoencoders (VAEs) and tools from Riemannian geometry. We specifically focus on estimating the ID of datasets by analyzing the numerical rank of the VAE decoder pullback metric. The estimated ID guides the construction of an atlas of local charts using a mixture of invertible VAEs, enabling accurate manifold parameterization and efficient inference. We  how this approach enhances solutions to ill-posed inverse problems, particularly in biomedical imaging, by enforcing that reconstructions lie on the learned manifold. Lastly, we explore the impact of network pruning on manifold geometry and reconstruction quality, showing that the intrinsic dimension serves as an effective proxy for monitoring model capacity.
\\[2mm]
   {\bf Keywords:} manifold intrinsic dimension; Riemannian metric; mixtures of VAEs;
manifold learning; biomedical imaging.
\end{abstract}

\end{frontmatter}

\section{Introduction}

High-dimensional datasets are ubiquitous in modern scientific and engineering domains, yet their inherent complexity often poses significant computational challenges~\cite{Chandra2001}. A prevailing assumption, known as the {\em manifold hypothesis}, assumes that such data frequently resides on a lower-dimensional, smooth manifold $M$ of dimension $d$ embedded within a higher-dimensional Euclidean space $\mathbb{R}^D$ (ambient space), with $d \ll D$. That is to say, only~$d$ independent
coordinates are needed, at least locally, to accurately
describe the manifold. To exploit this structure, having access to 
data only in the ambient space, one should: 
(a) estimate the dimension~$d$; (b) construct accordingly a coordinate
system describing points 
in~$M$; 
(c) obtain the mapping functions from the ambient
space to the manifold and vice-versa. 
However, this procedure is not trivial and often just one of the above points is addressed~\cite{Zeng_2024}.

\null

\noindent The task of estimating the number of degrees of freedom required to represent a dataset has been of interest to different fields ranging from 
information science to statistics and compressed sensing for many years. For example, classical approaches as PCA and its variants perform  dimensionality reduction by projecting
data on a linear subspace. 
At the same time, they also provide the mapping
functions to the low--dimensional representation. 
However, the linearity assumption is often too restrictive~\cite{Lever2017}. The manifold hypothesis suggests indeed that data lies on a nonlinear, low-dimensional manifold embedded in high-dimensional space. 
Newer approaches based on neural architectures are expected to be able to learn nonlinear manifolds, along with their geometrical structure, directly via deep generative models with improved accuracy.
In this work, we explore this idea and we propose a workflow which addresses points (a),(b) and (c) above. We specifically focus on the study of the structure of the learned manifold and we introduce a novel methodology for estimating its intrinsic dimension (ID). Our approach leverages singular Riemannian metrics induced by Variational Autoencoder (VAE) neural architectures. The core idea involves utilizing the pullback metric through the VAE decoder to recover the local geometry of the underlying manifold. The ID is then robustly estimated by analyzing the numerical rank of the resulting (potentially degenerate) metric tensor. This technique is particularly pertinent to the solution of inverse problems in biomedical imaging, where observations are typically corrupted by noise and the reconstruction process is severely ill-posed. By constraining solutions to lie on a learned manifold, we can effectively address both the ill-posedness and high dimensionality of these problems. The estimated intrinsic dimension further guides the construction of an atlas of local charts using mixtures of invertible VAEs, ensuring accurate manifold parameterization and tractable inference. Additionally, we investigate the effects of network pruning on the learned manifold structure and reconstruction quality, demonstrating that significant model compression is feasible up to a critical threshold related to the ID itself, beyond which the manifold geometry and reconstruction fidelity degrade.

\subsection{Related work}

\subsubsection{Estimation of the intrinsic dimension of datasets}
Several algorithms exist for estimating the ID of a generic dataset~$X$
embedded in a Euclidean space~$\mathbb{R}^D$. These methods can be broadly categorized into four groups: projective methods, fractal-based approaches, nearest-neighbor techniques, and sim\-plex-based strategies.
Projective methods rely on computing the eigenvalues of the $D \times D$ covariance matrix of the dataset, which are then used to estimate the intrinsic dimension by analyzing the magnitude of the sorted eigenvalues via Principal Component Analysis (PCA). Both a global version (gPCA) and a multiscale version (mPCA) have been proposed~\cite{5278634,2017_Little}. The main limitation of the former is that it can correctly detect the dimension only for linearly embedded manifolds and tends to overestimate the dimension of curved or non-linearly embedded datasets. In contrast, the latter addresses this issue but it may require in certain applications a very large number of samples~\cite{ErbaGherardi}, which can be challenging to achieve. 
Fractal methods are based on the assumption that the points of the dataset are sampled from the underlying manifold according to a smooth probability density function. The prototypical algorithm in this class is the correlation dimension method, introduced by Grassberger and Procaccia in \cite{GRASSBERGER1983189} to measure the fractal dimension in dynamical systems. In their approach, the dimension of a dataset is inferred using the so-called density of neighbors:
$\rho(r) = {2}/(N(N-1)) \sum_{1 \leq a<b\leq N}  H(r - \| x^a - x^b \|)$,
where $H$ is the step function, $x^a$ is the $a$-th point of the dataset, $N$ the number of samples and $\rho(r) \sim r^d$ as $r \rightarrow 0$. Since the original work of Grassberger and Procaccia, many other fractal-based methods for estimating intrinsic dimension have been developed \cite{1039212,3470,10.1155-2015-759567,213fadd85c594e5d96efda8eb455ab0c}. While effective in low-dimensional settings ($d \precsim 10$), these methods are known to underestimate the dimension of higher-dimensional manifolds \cite{CERUTI20142569}. Indeed, \cite{ECKMANN1992185} showed that accurately estimating the intrinsic dimension via fractal methods requires a number of samples that grows exponentially with the intrinsic dimension $d$.
Nearest neighbor-based approaches, such as MLE~\cite{Levina04,Karbauskaite_Dzemyda_Mazetis_2011}, estimate the intrinsic dimension under the assumption that nearby points are uniformly sampled from $d$-dimensional balls of sufficiently small radii, where $d$ is the true data dimension. However, when the true intrinsic dimension is high, these methods are known to significantly underestimate it \cite{CERUTI20142569}. To overcome this issue, several variants have been proposed, including MIND${}_{KL}$ \cite{Lombardi11}, DANCo \cite{CERUTI20142569}, and GeoMLE \cite{pmlr-v101-gomtsyan19a}.
Simplex-based methods, such as Expected Simplex Skewness \cite{6866171}, analyze the geometric properties of simplices constructed from the dataset. These methods generally yield good performance and require only mild assumptions about the underlying manifold. However, they tend to be more sensitive to noise than projective methods \cite{8404f72ee760436dad7f1be15af4b3d1}.
In addition to the aforementioned groups, more recent intrinsic dimension estimation techniques have been proposed, including methods based on stochastic processes \cite{Tempczyk22,Kamkari24,Candelori25} and others employing normalizing flows \cite{NEURIPS2022_4f918fa3}.

\subsubsection{Manifold learning via mixture models}
Nonlinear dimensionality reduction has traditionally been performed using algorithms such as Sammon's mapping, nonlinear PCA, Laplacian eigenmaps, Isomap, curvilinear component analysis, diffusion maps, and local tangent space alignment \cite{DBLP:books/sp/GhojoghCKG23,Ma2011,annurev-statistics-040522-115238}. More recently, manifold learning has emerged as a key area of interest within deep learning. Many successful approaches have been based on generative models, including generative adversarial networks \cite{9878714,DBLP:conf/nips/KhayatkhoeiSE18,doi:10.1073/pnas.2311436121}, injective flows \cite{NEURIPS2023_572a6f16,caterini2021rectangularflowsmanifoldlearning}, and score-based diffusion models \cite{10.5555/3600270.3602868,NIPS-eSZqaIrDLZR}.
In this work, we focus on VAEs~\cite{10.5555/3172077.3172161,pineau2018,pmlr-v130-connor21a} but different
architectures could be employed as well. A single VAE can be used to learn a manifold only when the manifold admits a global parameterization, which in particular requires the data to be connected and without topological holes. To address more complex structures, some authors have proposed modeling the VAE latent space using a Gaussian mixture model, enabling the learning of multimodal probability distributions \cite{falck2021multifacet,10.5555/3172077.3172161,10.5555/3600270.3601411,pineau2018}. However, this approach struggles to model overlapping charts and is thus not adequate for parameterizing manifolds with holes, such as a torus, or rotational degrees of freedom. 
To overcome this limitation, Alberti et al.~\cite{ManifoldVAEs} proposed approximating the data manifold using a mixture model of VAEs, a framework in which each component of the mixture is implemented by a separate VAE \cite{locatello2019,9408405}. Following this approach, we construct a collection of charts (an atlas) using a mixture model of VAEs, where each component of the mixture represents a local chart of the manifold.

\subsubsection{Manifold learning and inverse problems}

Many problems in image processing and various physical applications can be formulated as inverse problems of the form $y = F(x) + \zeta$ where the forward operator $F$ is a possibly nonlinear map between two Euclidean spaces, $y$ are the observations, $x$ the quantity to be reconstructed and $\zeta$ a noise term. Reconstructing $x$ solely from observations $y$ is generally not feasible due to the high dimensionality of the problem and its inherent ill-posedness. A common strategy to overcome these challenges is to incorporate prior knowledge through regularization theory. This involves minimizing the sum of a data fidelity term $\mathcal{L}$—such as the $L^2$ reconstruction error—and a data-driven regularizer $\mathcal{R}$ that encodes the prior information \cite{Hertrich_2022,Altekrger2022PatchNRLF,arridge_maass,Asim2018BlindID,pmlr-v70-bora17a}. In particular, high dimensionality of the problem can be mitigated by identifying a low-dimensional representation of the data to be reconstructed. Assuming that the inputs $x$ lie on a low-dimensional manifold, we can then formulate the constrained optimization problem
k$$\argmin_{x} \mathcal{L}(F(x),y) \ \textit{subject to} \ x \in M,$$
which corresponds to a regularizer $\mathcal{R}$ that is zero for $x$ on the manifold $M$ and infinite otherwise. This approach, however, requires learning the manifold $M$ beforehand. 
In \cite{ManifoldVAEs}, this is achieved using a mixture of invertible VAEs, which requires specifying the dimension of the latent space (and thus of the manifold) a priori. To address this limitation, we propose a method for estimating the ID of the data manifold via singular Riemannian geometry, using first a VAE trained to perform a Whitney embedding of the data which performs an initial dimensionality reduction and a series of AEs that build the local charts, significantly accelerating the process—especially when a large number of charts is needed.

\subsection{Original contributions and organization of the paper}

The main contribution of this paper is a novel methodology to perform manifold learning while 
at the same time estimating the manifold intrinsic dimension. The approach uses the singular Riemannian metric induced by a neural architecture as a VAE:  
namely, we use the VAE decoder’s pullback metric to recover the local geometry of the data manifold and we estimate its ID via the numerical rank of (possibly degenerate) metric tensors. 
We construct an atlas of charts on the manifold via mixtures of invertible VAEs informed by the estimated ID, enabling accurate manifold parameterization and tractable inference.
We discuss the benefits of the application of the proposed
approach to the solution of inverse problems from biomedical imaging, where we consider a  regularization strategy based on constraining the solution to belong to the learned manifold.
In addition, we explore the effect of pruning on the net, showing that significant parameter reduction is possible up to a critical threshold, beyond which both reconstruction fidelity and estimated ID degrade—offering ID as a proxy for network capacity monitoring.

\null

The paper is organized as follows. In \Cref{section:preliminaries}, we recall key concepts from Differential and Riemannian Geometry that are essential for the remainder of this work, and we review foundational ideas related to mixtures of VAEs and invertible neural networks. In \Cref{section:dimension_estimate_general}, we present our ID estimation method based on singular Riemannian Geometry, demonstrating how both the encoder and decoder of a VAE can be leveraged for this purpose. We also illustrate the effectiveness of our technique on simple low-dimensional datasets. In \Cref{section:application_to_inverse_problem}, we explore how our method can be applied to an inverse problem from medical imaging, where estimating the ID is crucial for constructing a representation of the data manifold using local charts. These tasks are carried out on a toy medical imaging dataset in \Cref{sec:Numerical_experiments}, where we first perform ID estimation, and then build an atlas of local charts using a mixture of invertible VAEs. We also compare our ID estimation approach with existing methods, demonstrating superior performance on high-dimensional data. Finally, we investigate how pruning a VAE influences the ID of the learned manifold.

\section{Preliminaries}\label{section:preliminaries}
\subsection{Background on Differential Geometry}\label{subsec:geometry}
In this section we briefly recall some differential geometry notions  useful for the following (see, e.g., \cite{doCarmo,TuIntro}). We start from the definition of manifold.
\begin{definition}\label{def:manifold_charts}
A smooth $d$-dimensional manifold $M$ is a second countable and Hausdorff topological space such that every point $p \in M$ has a neighbourhood $U_p$ that is homeomorphic to a subset of $\mathbb{R}^d$ through a map $\phi_p:U_p \rightarrow \mathbb{R}^{d}$, with the additional requirement that for $p, q\in M$ if $U_p \cap U_q \neq \emptyset$, then the 
transition chart $\phi_p \circ \phi_q^{-1}$ is a smooth diffeomorphism. The pair $(U_p,\phi_p)$ is called a local chart and the collection of all the possible local charts at all points is called atlas (see \Cref{fig:manifold}).
\end{definition}
\begin{figure}[tbh]
\centering
\includegraphics[scale=1.28,angle=0]{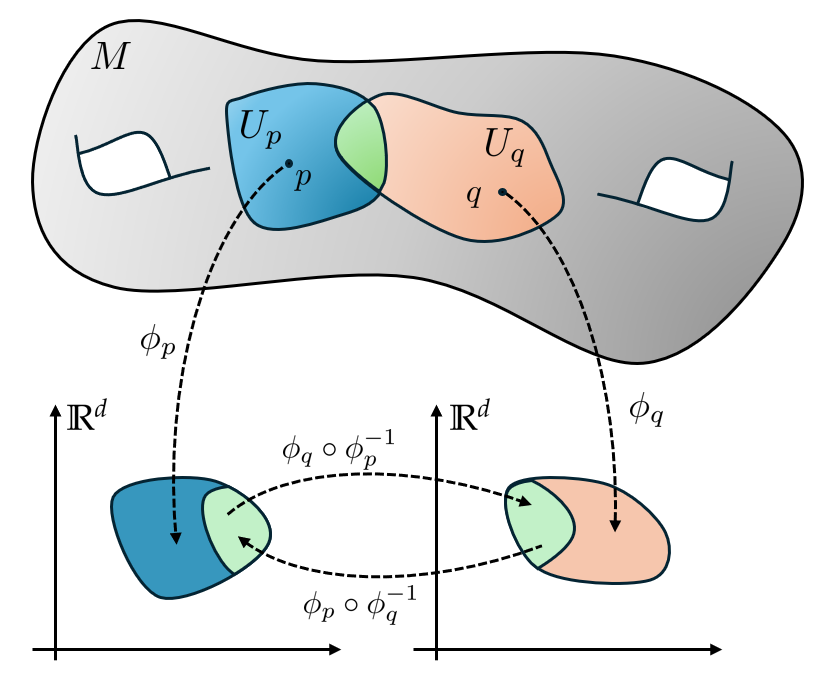}
\caption{Visual representation of a smooth $d$-dimensional manifold $M$. Each point $p \in M$ has a neighbourhood $U_p$ that is homeomorphic to an open subset of $\mathbb{R}^d$ via a map $\phi_p: U_p \rightarrow \mathbb{R}^d$. If two neighbourhoods $U_p$ and $U_q$ overlap (i.e., $U_p \cap U_q \neq \emptyset$), the corresponding transition map $\phi_p \circ \phi_q^{-1}$ is required to be a smooth diffeomorphism. Each pair $(U_p, \phi_p)$ defines a \emph{local chart}, and the collection of all such charts forms an \emph{atlas} of the manifold.}
\label{fig:manifold}
\end{figure}
In practice, a convenient way to build manifolds is to realize them as subsets of Euclidean spaces by means of embeddings maps. 
\begin{definition}
A smooth map $F : M \rightarrow \mathbb{R}^d$ is an embedding if its differential is
everywhere injective and if it is an homeomorphism with its image.
\end{definition}
Any manifold can be realized in a Euclidean space of dimension high enough as made precise by the following~\cite{NashEmbedding}.
\begin{theorem}[Strong Whitney embedding theorem]
Every smooth manifold of dimension~$d$ admits a smooth embedding into $\mathbb{R}^{2d}$.
\end{theorem}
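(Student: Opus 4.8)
The plan is to prove the statement in two stages. The first, relatively elementary stage (Whitney's ``easy'' embedding theorem) produces an embedding of $M$ into $\mathbb{R}^{2d+1}$ together with an immersion into $\mathbb{R}^{2d}$ whose only self-intersections are finitely many transverse double points. The second stage, the Whitney trick, then removes those double points one cancelling pair at a time, producing an honest embedding into $\mathbb{R}^{2d}$; this is where the real work lies, and it forces the hypothesis $d\ge 3$, the small cases being treated separately.

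For stage one I would first take $M$ compact. Choosing a finite atlas $\{(U_i,\phi_i)\}_{i=1}^k$ with a subordinate partition of unity $\{\rho_i\}$ (available since $M$ is Hausdorff and second countable), one forms $F_0=(\rho_1\phi_1,\dots,\rho_k\phi_k,\rho_1,\dots,\rho_k):M\to\mathbb{R}^N$, each $\rho_i\phi_i$ being extended by $0$ off $U_i$; a direct check shows $F_0$ is an injective immersion, hence an embedding by compactness. To cut down $N$, consider orthogonal projections $\pi_v$ away from a unit vector $v$: the composite $\pi_v\circ F_0$ fails to be injective only when $v$ is a secant direction of $F_0(M)$ and fails to be an immersion only when $v$ is tangent to $F_0(M)$. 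Secant directions form the image of a smooth map from the $2d$-dimensional manifold $(M\times M)\setminus\Delta$, and tangent directions the image of a map from the $(2d-1)$-dimensional unit tangent bundle, so by Sard's theorem a generic $v$ preserves the embedding as long as $N-1>2d$; iterating brings us down to $\mathbb{R}^{2d+1}$. One further generic projection to $\mathbb{R}^{2d}$ can no longer preserve injectivity but still preserves immersivity, and a final small perturbation puts the two local sheets at each self-intersection in general position, so the double-point set is finite and transverse. For non-compact $M$ one runs the same construction relative to an exhaustion $M=\bigcup_i K_i$ by compacta, adjoining a proper map $M\to\mathbb{R}$ as an additional coordinate so that the injective immersion obtained is proper, hence an embedding.

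Stage two is the main obstacle. Given the immersion $g:M\to\mathbb{R}^{2d}$ with transverse double points, I would cancel them in pairs. For a pair $\{x,y\}$ of double points one joins their preimages in $M$ by embedded arcs, so that the $g$-image of the union of the two arcs is a closed loop $\gamma\subset\mathbb{R}^{2d}$; since $\gamma$ is null-homotopic and $d\ge 3$, general position lets one span $\gamma$ by an embedded Whitney disk whose interior is disjoint from $g(M)$, and pushing one sheet of $M$ across this disk simultaneously eliminates both double points of the pair. Two points demand care: (i) the double points must genuinely occur in cancelling pairs, which requires analysing the self-intersection invariant (valued in $\mathbb{Z}$ or in $\mathbb{Z}/2$ according to the parity of $d$ and the orientability of $M$) and, in general, performing preliminary finger moves that introduce auxiliary cancelling pairs so that this obstruction can be made to vanish; and (ii) the Whitney disk must be embedded and carry the normal framing induced by the two sheets, and it is exactly the dimension count $d+2<2d$ that forces a generic disk's interior off $g(M)$ — this is where $d\ge 3$ is essential. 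The excluded low dimensions are handled directly: a $1$-manifold is a disjoint union of copies of $S^1$ and $\mathbb{R}$, which fit disjointly into $\mathbb{R}^2$, and every surface embeds into $\mathbb{R}^4$, since it suffices to embed the non-orientable generators $\mathbb{R}P^2$ and the Klein bottle, the orientable surfaces already embedding in $\mathbb{R}^3$.

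Combining the two stages yields the embedding into $\mathbb{R}^{2d}$ for $d\ge 3$, and the direct constructions settle $d\le 2$. I expect essentially all the difficulty to be concentrated in the sign and parity bookkeeping that guarantees the double points can be grouped into cancelling pairs, together with the construction of the embedded, correctly framed Whitney disk and the isotopy supported in a neighbourhood of it.
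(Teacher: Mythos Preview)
Your outline is a faithful sketch of Whitney's original argument and is essentially correct as a proof strategy, including the identification of the genuinely hard step (the Whitney trick and its dimension restriction $d\ge 3$) and the need for separate treatment of low dimensions. However, there is nothing to compare against: the paper does not prove this theorem at all. It is stated in \Cref{subsec:geometry} purely as background from differential geometry, with a citation to the literature, and the surrounding discussion immediately moves on to use it as a black box motivating the choice of latent-space dimension. So your proposal is not ``the same approach'' or ``a different approach'' relative to the paper --- it is simply a proof where the paper offers none.

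If your intent is to supply a proof for completeness, be aware that what you have written is a high-level roadmap rather than a proof: the sign/parity bookkeeping for the self-intersection invariant, the construction of the embedded Whitney disk with the correct normal framing, and the explicit isotopy across it each require several pages to execute carefully, and the non-compact case needs more than a sentence (one must control the double points within each compact piece and ensure the properness of the limiting map). For the purposes of this paper, which only uses the theorem to justify that a latent space of dimension roughly $2d$ suffices, a citation is entirely appropriate and a full proof would be out of scope.
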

The dimension given by the Whitney embedding theorem is only an upper bound on the minimal dimension of the ambient Euclidean space required to embed a manifold $M$ of dimension $d$. In fact, there are cases where embeddings exist in Euclidean spaces of dimension smaller than $2d$. A simple example is the two-dimensional sphere $\mathbb{S}^2 \subset \mathbb{R}^3$.

Next, we introduce the notion of tangent spaces, which can be viewed as local linear approximations of the manifold in the neighborhood of each point.
\begin{definition}
The tangent space $T_p M$ over a point $p \in M$ is the set of the tangent vectors associated to the curves passing trough $p$. For manifolds realized as subsets of a Euclidean space of dimension $D$, in a local chart $(U_\alpha,\phi_\alpha)$  we have $T_p M = \left\{ J \phi_\alpha^{-1}|_{\phi_\alpha(p)} y \ | \ y \in \mathbb{R}^d \right\}$, where $J$ is the Jacobian matrix of the map.  
\end{definition}
When a manifold is defined as a subset of an ambient Euclidean space, it naturally inherits from it a Riemannian metric, equipping the  tangent space $T_p M$ in each $p \in M$ with the inner product $g_p(u, v) = (u, v)_p$ for all $u, v \in T_p M$. More generally, if the ambient space is equipped with a Riemannian metric, we can pull back it to define a geometrical structure also on $M$.
\begin{definition}[Pullback metric]\label{def:pullback}
``Consider two manifolds $M$ and $N$, the latter equipped with a Riemannian metric $h$. Let $F:M \rightarrow N$\footnote{Here, we use the symbol $N$ to denote the ambient space as usual in this literature. The same symbol has been used above to denote the number of samples, as it is also usual. We will keep this double notation as it will be clear to what it refers in each specific context.} be an embedding. The pullback metric induced by $F$ on the manifold $M$ is given in local coordinates by 
$$g = F^*h =  J_F^T h J_F,$$
where $F^*$ is the pullback operator through
the map $F$ and $J_F:T_pM \rightarrow T_pN$
 is the differential (aka pushforward) of $F$.
\end{definition}
Notice that, since in general a manifold can be defined as an object in its own right, not necessarily as a subset of an ambient space, if we embed a manifold $M$ into another manifold $N$, it may happen that the intrinsic metric $h$ on $M$ and the pullback metric $g$ induced from the ambient space do not coincide. The two metrics agree only for a distinguished class of embeddings.
\begin{definition}
Let $M$ and $N$ be two Riemannian manifolds with metric $g$ and $h$, respectively. An embedding $F:M \rightarrow N$ is called isometric if $F^* h = g$.
\end{definition}
Every Riemannian manifold can be embedded in a Euclidean space of sufficiently high dimension using an \emph{isometric embedding}, that is, a map preserving the Riemannian metric~\cite{NashEmbedding,KUIPER1955545}.
\begin{theorem}[Nash Embedding Theorem]
Let $(M,g)$ be a Riemannian manifold of dimension $d$. Then
there exist a $\mathcal{C}^1$ isometric embedding $F : M \rightarrow \mathbb{R}^{2d+1}$.
\end{theorem}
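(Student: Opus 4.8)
The plan is to prove this via the Nash--Kuiper convergent-integration (``corrugation'') scheme: the isometric embedding is obtained as a $\mathcal{C}^1$-limit of a sequence of smooth embeddings $F_n : M \to \mathbb{R}^{2d+1}$ whose pullback metrics $g_n := F_n^{*}e$ ($e$ the Euclidean metric) increase toward $g$. First I would produce a \emph{short} starting embedding: by the Whitney embedding theorem $M$ embeds smoothly into $\mathbb{R}^{2d}\subset\mathbb{R}^{2d+1}$, and after rescaling by a small constant (locally, along an exhaustion of $M$ by compacta when $M$ is non-compact) one gets $F_0$ for which the \emph{deficit} $\delta_0 := g - g_0$ is a positive-definite symmetric $2$-tensor. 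The whole argument then consists in driving $\delta_n := g - g_n$ to $0$ while keeping $\sum_n \|F_{n+1}-F_n\|_{\mathcal{C}^1}$ finite.

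The engine is a one-step lemma that adds a single \emph{primitive} metric $a^{2}\,d\varphi\otimes d\varphi$ (with $a\ge 0$ and $\varphi:M\to\mathbb{R}$ smooth) to the induced metric. Given $F$ whose image has codimension $\ge 2$, pick orthonormal normal fields $\nu_1,\nu_2$ along $F(M)$ and set, for a large frequency $\lambda$,
\[
F_\lambda \;=\; F + \frac{a}{\lambda}\bigl(\cos(\lambda\varphi)\,\nu_1 + \sin(\lambda\varphi)\,\nu_2\bigr).
\]
Differentiating, the factor $\lambda$ produced by the oscillation cancels the amplitude $1/\lambda$, so $J_{F_\lambda} = J_F + \bigl(-a\sin(\lambda\varphi)\,\nu_1 + a\cos(\lambda\varphi)\,\nu_2\bigr)\,d\varphi + O(1/\lambda)$; since $\nu_1,\nu_2\perp J_F$ and $\sin^{2}+\cos^{2}=1$, a short computation gives $F_\lambda^{*}e = F^{*}e + a^{2}\,d\varphi\otimes d\varphi + O(1/\lambda)$, while $\|F_\lambda-F\|_{\mathcal{C}^{0}} = O(1/\lambda)$ and the first derivatives of $F_\lambda$ are bounded by those of $F$ plus $O(a)$. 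Taking $\lambda$ large makes the error negligible in $\mathcal{C}^{0}$, and one checks directly that $F_\lambda$ is again a smooth embedding (it is an immersion because the added Jacobian term contributes a normal component of size $\gtrsim a$ wherever $d\varphi\ne 0$ and is $O(1/\lambda)$ where $d\varphi = 0$; injectivity is inherited from $F$ once $\lambda$ is large).

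To pass from primitive metrics to arbitrary ones, fix a finite atlas with a subordinate partition of unity $\{\chi_j\}$ and, in each chart, decompose $\chi_j\delta_n$ as a finite sum $\sum_k a_{jk}^{2}\,d\varphi_{jk}\otimes d\varphi_{jk}$, using that every symmetric positive-semidefinite matrix is a nonnegative combination of squares of a fixed set of linear forms. Applying the one-step lemma successively to these finitely many primitives (each with its own large frequency) turns $F_n$ into a smooth embedding $F_{n+1}$ with $\delta_{n+1}$ still positive definite and $\|\delta_{n+1}\|_{\infty}\le\theta\,\|\delta_n\|_{\infty}$ for a fixed $\theta\in(0,1)$, at a $\mathcal{C}^{1}$-cost of order $\sqrt{\|\delta_n\|_{\infty}}$ (the amplitudes $a_{jk}$ scale like $\sqrt{\|\delta_n\|_{\infty}}$). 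Since $\|\delta_n\|_{\infty}\le\theta^{n}\|\delta_0\|_{\infty}\to 0$ geometrically, the increments $\|F_{n+1}-F_n\|_{\mathcal{C}^{1}} = O(\theta^{n/2})$ are summable, so $F_n$ converges in $\mathcal{C}^{1}$ to a map $F$; the limit is a $\mathcal{C}^{1}$ immersion with $F^{*}e = \lim_n F_n^{*}e = g$, hence isometric, and it is an embedding since immersivity and injectivity are $\mathcal{C}^{1}$-open conditions (treated compactum by compactum along the exhaustion in the non-compact case).

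The hard part is the simultaneous control demanded by the one-step lemma once it is iterated infinitely often: one must choose the decay factor $\theta$ and the schedule of frequencies $\lambda$ so that the accumulated $\mathcal{C}^{1}$-norm stays bounded over the whole sequence — and this is exactly the balance that forces the construction to stop at regularity $\mathcal{C}^{1}$, since a convergent $\mathcal{C}^{2}$ analogue would contradict the Gauss curvature obstruction to isometric immersions. The secondary difficulty is preserving injectivity of the limit (so that $F$ is an embedding and not merely an immersion) when $M$ is non-compact, which requires the corrugations at each stage to be localized compatibly with the exhaustion so that the construction stabilizes on every compact piece.
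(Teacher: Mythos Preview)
The paper does not prove this theorem at all: it is stated in the preliminaries as a classical result and simply cited to Nash~\cite{NashEmbedding} and Kuiper~\cite{KUIPER1955545}. There is therefore no ``paper's own proof'' to compare your proposal against.

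That said, your sketch is a faithful outline of the genuine Nash--Kuiper convex-integration argument and is broadly correct in spirit. A couple of points would need tightening in a full write-up. First, orthonormal normal fields $\nu_1,\nu_2$ along $F(M)$ need not exist globally (the normal bundle need not be trivial), so the one-step spiral perturbation must already be localized to a chart where the normal bundle trivializes; you do later invoke a partition of unity, but the lemma as you stated it presumes global normals. Second, the statement that ``immersivity and injectivity are $\mathcal{C}^1$-open conditions'' is delicate: immersivity is open, but global injectivity of the limit is not automatic from $\mathcal{C}^1$-closeness on a non-compact manifold, and even on a compact one it requires that the $\mathcal{C}^0$-increments be small relative to a lower bound on the distance between distinct preimages --- you flag this difficulty but the argument needs an explicit quantitative step. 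Finally, note that using two normal directions (spirals) is Nash's original device and already suffices here since the codimension in $\mathbb{R}^{2d+1}$ is $d+1\ge 2$; Kuiper's refinement to codimension one is not needed for the target dimension stated.
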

For isometric embeddings, we can recover the Riemannian metric of $M$ by pulling back the standard Euclidean metric via the embedding map.
Sometimes, a manifold is constructed within an ambient space using a map $F$ that is not an embedding, because its image does not depend on all the variables. However, the restriction of $F$ to a suitable subset of variables may yield an embedding. In general, given a smooth map $F: M \rightarrow N$, where $N$ is a Riemannian manifold equipped with a metric $h$, the pullback metric $F^*h$ is not necessarily a family of non-degenerate inner products on the tangent spaces of $M$. In fact, it may happen that $F^*h$ is degenerate.
\begin{definition}
A degenerate Riemannian metric $g$ over a smooth manifold $M$ is a smooth family $\{g_p\}_{p \in M}$ of degenerate inner products on the tangent spaces of $M$.
\end{definition}
\begin{example}\label{ex:degenerate_metric}
Consider $F: \mathbb{R}^2 \rightarrow \mathbb{R}^3$, $(x,y) \mapsto (-x-y,x+y,(x+y)^3)$. Suppose that $\mathbb{R}^3$ is equipped with the standard Euclidean metric $h = \mathbb{I}$.
A quick computation shows that
\begin{equation*}
J_F (x,y) = 
\begin{pmatrix}
-1 & -1\\
1 & 1\\
3(x+y)^2 & 3(x+y)^2
\end{pmatrix}
\end{equation*} 
from which we find
\begin{equation*}
g(x,y) = (F^* h)(x,y) =\left(  2 + 9(x+y)^4 \right)
\begin{pmatrix}
1 & 1\\
1 & 1
\end{pmatrix}.
\end{equation*}
In this example $F$ is not an embedding since $J_F$ is not injective and the metric $g$ has rank 1, with eigenvalues $\lambda=0$ and $\lambda=4+18(x+y)^4$, corresponding to the eigenvectors $(1,-1)$ and $(1,1)$, respectively. This fact can also be seen considering the change of coordinates $X=x-y$, $Y=x+y$, such that 
\begin{equation*}
g(X,Y) = \left( 2 + 9 Y^4 \right) 
\begin{pmatrix}
0 & 0\\
0 & 1
\end{pmatrix}.
\end{equation*}
Note that the degeneracy of $g$ is actually independent of the choice of the Riemannian 
metric~$h$, as it is instead a consequence of the structure of the map $F$. Indeed, in the new coordinates $(X, Y)$, we observe that $F$ depends only on the variable $X$, which is mapped to $(-X, X, X^3)$. In these coordinates, the restriction of $F$ to the coordinate $X$, namely the function $H(X) = F(X; \cdot) : \mathbb{R} \rightarrow \mathbb{R}^3$, is an isometric embedding.
\end{example}
\begin{remark}
The scenario depicted in this example may occur in a neural architecture as a  VAE and, more in general
in an architecture which produces a latent space of reduced dimension, when certain axes of the latent space are not utilized. This happens when the data reconstruction problem can be effectively solved using fewer dimensions than the prescribed dimension of the latent space.
\end{remark}
Given a map $F: \mathbb{R}^M \rightarrow \mathbb{R}^N$ such that the restriction of $F$ to a suitable submanifold is an embedding, it is generally difficult to determine which (nonlinear) combinations of the variables define that submanifold. However, as illustrated in \Cref{ex:degenerate_metric}, and under the assumption that $F$ has the same rank on the points of the manifold, the number of such variables corresponds to the rank of the Jacobian $J_F$. This rank is equal to the rank of the pullback metric $g = F^* \mathbb{I} = J_F^\top J_F$.

\null

\noindent In light of the above considerations, we make the following assumption on the data.
\begin{hypothesis}
The data, given as points in a high-dimensional Euclidean space $\mathbb{R}^D$, lie on a low-dimensional Riemannian manifold of dimension $d$ embedded in $\mathbb{R}^d$, with $D > 2d + 1$. 
\end{hypothesis}

\subsection{Background on Neural Architectures}\label{subsec:nnarch}
In this section we introduce the neural architectures we will deal with in the following. 

\subsubsection{VAEs, $\beta$--VAEs and Mixture of VAEs}\label{subsec:mixture}

VAEs are neural models that learn to encode input data into a probabilistic latent space and then decode from that space to reconstruct or generate new data. More specifically, given data points $x_1, \dots, x_N$,  sampled from a probability distribution $P_X$ of a random variable~$X$ taking values in an ambient space $\mathbb{R}^D$, a VAE aims to approximate $P_X$ by introducing a lower-dimensional latent random variable $Z$ defined on $\mathbb{R}^m$.
The model consists of an encoder $\mathcal{E} : \mathbb{R}^D \rightarrow \mathbb{R}^m$ and a decoder $\mathcal{D} : \mathbb{R}^m \rightarrow \mathbb{R}^D$, which are trained to be approximate inverses of each other. 
The encoder approximates $Z$ from $X$ by learning a random variable $Z = \mathcal{E}(X) + \zeta_\mathcal{E}$, with $\zeta_\mathcal{E}$ sampled from a Gaussian distribution.
The decoder learns to reconstruct $X$ from $Z$ through a random variable $X_\mathcal{D} = \mathcal{D}(Z)$.

Denoting by $\Theta$ the parameters of the encoder and decoder, the VAE is trained by minimizing the sum of the negative evidence lower bound (ELBO) over all data points
\begin{equation*}
\mathcal{L}(\Theta) = - \sum_{i=1}^N ELBO(x_i|\Theta)
\end{equation*}
where
\begin{equation}
ELBO(x|\Theta) = - \mathbb{E}_{z \sim P_{Z_\mathcal{E}}(\cdot,|x)} \left[ \log P_{X_\mathcal{D}} (x|z)\right] - D_{KL}\left( P_{Z_\mathcal{E}}(\cdot|x) || P_{X_\mathcal{D}}(\cdot)  \right)
\label{eq:ELBO}
\end{equation}
where $P_{X_\mathcal{D}}$ is the computable approximation of
$P_{X}$ and $D_{KL}$ is the Kullback-Leibler (KL) divergence and where one can show that the first term in (\ref{eq:ELBO}) is equal, up to an additive constant, to minus the reconstruction error $\frac12 \| x - \mathcal{D}(z) \|_2^2$. 

\null

The ELBO can be modified adding a real hyperparameter $\beta>0$ weighting the relative importance of the reconstruction term and of the KL divergence \cite{higgins2017betavae}
\begin{equation*}
ELBO(x|\Theta) = - \mathbb{E}_{z \sim P_Z(\cdot,|x)} \left[ 
\log P_{X_\mathcal{D}} (x|z)\right] - \beta  D_{KL}\left( P_Z(\cdot|x) || P_{X_\mathcal{D}(\cdot)}  \right)
\end{equation*}

A VAE trained with this modified ELBO is known as a $\beta$-VAE. When $\beta>1$, the KL divergence term gains importance and it encourages the model to learn a more effective data representation, leading to some disentanglement of the data inherent degrees of freedom. Conversely, setting $0<\beta<1$ prioritizes reconstruction fidelity, which, however, comes at the expense of generalization and the quality of the data representation in the latent space.

\null

In a Mixture of VAEs, the distribution $P_X$ of samples from the data manifold is approximated by a combination of $q$ random variables $X_k$, for $k=1,\dots,K$. This is expressed as
$$P_X (x) = \sum_{i=1}^K \alpha_i P_{X_i}(x)$$
where $\alpha_i >0$ are mixing weights that satisfy $\sum_{i=1}^K \alpha_i = 1$. Each random variable $X_k$ within this mixture model is learned by a distinct VAE.
The atlas of an embedded manifold $M$ can be learned by representing it as a mixture model of $\beta$--VAEs. In this framework, the encoders $\mathcal{M}^{\mathcal{E},k}$ serve as the local charts of the manifold, while the decoders $\mathcal{M}^{\mathcal{D},k}$ act as its local parametrizations (which are the inverse maps of the local charts), for $k=1,\dots,q$.
A mixture of VAEs is trained by maximizing the ELBO of the mixture. This process simultaneously learns both the weights of the individual VAEs and the mixing weights. For a detailed explanation, refer to \cite{ManifoldVAEs}.

\subsubsection{Invertible Neural Netowrks} \label{subsec:inns}
As the maps $\phi$ in \Cref{def:manifold_charts} are injective, any neural network used to realize them must be one-to-one. To ensure this property, we employ an architecture that utilizes the coupling blocks proposed in \cite{Dinh17, ardizzone19}, thereby constructing an invertible neural network (INN).
The fundamental unit of the INN is a reversible block, consisting of two complementary affine coupling layers. Let $x \in \mathbb{R}^n$, where $n \in \mathbb{N}$, be an input datum. Given natural numbers $n_1, n_2$ such that $n_1+n_2=n$, we factorize $\mathbb{R}^n$ as $\mathbb{R}^{n_1}\times\mathbb{R}^{n_2}$. We then consider the projections of $x$, denoted as $(x_1,x_2) \in \mathbb{R}^{n_1} \times \mathbb{R}^{n_2}$. Next, we introduce the map:
\begin{equation*}
\begin{split}
F :\mathbb{R}^{n_1}\times\mathbb{R}^{n_2} & \rightarrow \mathbb{R}^{n_1}\times\mathbb{R}^{n_2}\\
(x_1,x_2) & \mapsto (y_1,y_2)
\end{split}
\end{equation*}
defined by
\begin{equation}
\begin{cases}
y_1 = x_1 e^{C(x_2)}+D(x_2)\\
y_2 = x_2 e^{A(x_1)}+B(x_1),
\end{cases}
\end{equation}
with $A,B : \mathbb{R}^{n_1} \rightarrow \mathbb{R}^{n_2}$ and $C,D : \mathbb{R}^{n_2} \rightarrow \mathbb{R}^{n_1}$ four not necessarily invertible sub--networks. The map~$F$ has the trivial analytic inverse
\begin{equation}
\begin{cases}
x_2 = \left( y_2 - B(y_1) \right)  e^{-A(y_1)}\\
x_1 = \left( y_1 - D(z_2) \right)  e^{-C(z_2)}
\end{cases}.
\end{equation}
Invertible networks allow to jointly learn both the forward and backward functions, meaning the decoder can be directly obtained as the inverse of the encoder. Furthermore, unlike a standard non-invertible VAE, an invertible VAE is always guaranteed to use the full latent space because it is injective.
This characteristic can be used, for example, to guarantee the invertibility of local charts, as demonstrated in~\cite{ManifoldVAEs}, especially when the intrinsic dimension of the data manifold is known. On the other hand, an invertible VAE cannot learn a data representation using only a subspace of the latent space. This limitation can be problematic if the intrinsic dimension is not known beforehand. 

\section{Intrinsic dimension estimate in VAEs via geometric notions }\label{section:dimension_estimate_general}
Suppose we know the metric $h$ of the ambient space where the data is embedded -- in applications, typically a Euclidean space with standard metric. Once a VAE has been trained on a dataset with a latent space of sufficiently large dimension, we can use the pullback of $h$ through the decoder $\mathcal{D}$ of the network, as per \Cref{def:pullback}, namely $g = J_{\mathcal{D}}^T J_{\mathcal{D}}$, to estimate the dimension of the Riemannian manifold embedded in the latent space. 
Indeed, if the dimension of the VAE latent space is greater than the intrinsic dimension of the data manifold, $g$ is a degenerate metric. In such cases, we can estimate the data manifold dimension using the numerical rank of the metric $g$, as illustrated in \Cref{ex:degenerate_metric}. 

\null

\noindent Practically, for a sample of points in the dataset, we compute the eigenvalues of the metric $g$ and we order them in decreasing order. An eigenvalue $\lambda_i$ is considered effectively ``null'' (i.e., $\lambda_i \ll 1$) if there is a large gap (orders of magnitude) between its value and $\lambda_{i-1}$. An example of this situation is shown in \Cref{fig:dimension_estimate_simple}. The intrinsic dimension of the data manifold is then determined by the average number of non-null eigenvalues observed across the sampled points in the dataset. To avoid inspecting the eigenvalue plot for each point of the sample, we can consider the eigenvalues averaged over all elements of the sample, as shown in \Cref{algo:ID_estimation}.
\begin{algorithm}[H]
\begin{algorithmic}[1]
\small
\State {Given a dataset with $N$ samples and a VAE trained on the dataset, with decoder $\mathcal{D}$:}
\For {$k=0,1,\dots,N-1$}
\State {compute $J_{\mathcal{D}}(x_k)$}
\State {compute the pullback metric $g(x_k) = J_\mathcal{D}^T J_\mathcal{D}$ }
\State {diagonalize $g(x_k)$ to obtain its eigenvalues $\underline{\lambda_k}$}
\EndFor
\State{compute the element-wise mean of $\{ \underline{\lambda}_k \}_{k=0,\cdots,N-1}$}
\State{search for the first ``null'' eigenvalue}
\end{algorithmic}
\caption{\small ID estimation algorithm.
}
\label{algo:ID_estimation}
\end{algorithm}
\noindent
Notice that, provided the network has been trained to adequately reproduce the data, either the decoder or the encoder can be used to compute the pullback metric, as the following proposition holds.
\begin{proposition}
Let $\mathcal{E} : \mathbb{R}^n \rightarrow \mathbb{R}^m$ and $\mathcal{D} : \mathbb{R}^m \rightarrow \mathbb{R}^n$ be two differentiable maps with $\mathcal{D} \mathcal{E} = \mathbb{I}$ and suppose that $n>m$. Then $J_\mathcal{D}^T J_\mathcal{D} = \left( J_\mathcal{E} J_\mathcal{E}^T \right)^{^\dagger}$.
\end{proposition}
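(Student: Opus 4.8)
The plan is to reduce the statement to a pointwise identity between the linear maps $J_{\mathcal{E}}$ and $J_{\mathcal{D}}$, and then to a purely algebraic fact about the Moore--Penrose pseudoinverse. Fix a point $p$ on the data manifold $M=\mathcal{D}(\mathbb{R}^m)$ and set $A:=J_{\mathcal{E}}(p)$ (an $m\times n$ matrix) and $B:=J_{\mathcal{D}}(\mathcal{E}(p))$ (an $n\times m$ matrix). The hypothesis $\mathcal{D}\circ\mathcal{E}=\mathbb{I}$, read along $M$, together with the chain rule yields a relation between $A$ and $B$; the relation I would establish is that $A$ is exactly the Moore--Penrose pseudoinverse of $B$, i.e. $A=B^{\dagger}$. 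Granting this, the proposition becomes the single-matrix statement $B^{\top}B=(AA^{\top})^{\dagger}=\bigl(B^{\dagger}(B^{\dagger})^{\top}\bigr)^{\dagger}$.

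To prove that matrix identity I would use the singular value decomposition $B=U\Sigma V^{\top}$, so that $B^{\dagger}=V\Sigma^{\dagger}U^{\top}$ and hence $AA^{\top}=B^{\dagger}(B^{\dagger})^{\top}=V\,\Sigma^{\dagger}(\Sigma^{\dagger})^{\top}\,V^{\top}$. Since $\Sigma^{\dagger}(\Sigma^{\dagger})^{\top}$ and $(\Sigma^{\top}\Sigma)^{\dagger}$ are the same $m\times m$ diagonal matrix (both carry $1/\sigma_i^{2}$ on the support of the nonzero singular values and $0$ elsewhere), and since conjugation by the orthogonal matrix $V$ commutes with the pseudoinverse, we get $AA^{\top}=V(\Sigma^{\top}\Sigma)^{\dagger}V^{\top}=(V\Sigma^{\top}\Sigma V^{\top})^{\dagger}=(B^{\top}B)^{\dagger}$. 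Taking pseudoinverses of both sides and invoking the involution property $(X^{\dagger})^{\dagger}=X$ gives $(AA^{\top})^{\dagger}=B^{\top}B$, which is the claim; in particular $\operatorname{rank}(J_{\mathcal{E}}J_{\mathcal{E}}^{\top})=\operatorname{rank}(J_{\mathcal{D}}^{\top}J_{\mathcal{D}})$, so either factor recovers the same intrinsic dimension. As an alternative to the SVD one can verify the four Moore--Penrose equations directly for the pair $B^{\top}B$, $AA^{\top}$.

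The step that genuinely requires care — and which I expect to be the main obstacle — is the claim $J_{\mathcal{E}}=J_{\mathcal{D}}^{\dagger}$. From $\mathcal{D}\circ\mathcal{E}=\mathbb{I}$ on $M$ the chain rule only gives that $J_{\mathcal{E}}$ acts as a left inverse of $J_{\mathcal{D}}$ \emph{on the tangent directions} $T_pM$ (recall that when the latent dimension exceeds the intrinsic dimension $J_{\mathcal{D}}$ is rank-deficient, so a genuine two-sided identity cannot hold); and a generic left inverse is not the pseudoinverse, so some admissible $J_{\mathcal{E}}$ would yield a different, wrong answer. To single out the pseudoinverse one additionally needs that $J_{\mathcal{E}}$ annihilates the normal space $(T_pM)^{\perp}\subset\mathbb{R}^{n}$, equivalently that the differential of $\mathcal{D}\circ\mathcal{E}$ at $p$ is the \emph{orthogonal} projection onto $T_pM$. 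I would derive this from the fact that a VAE trained to reproduce the data realizes, to first order, the nearest-point projection onto the learned manifold, so that $\mathcal{D}\circ\mathcal{E}$ is idempotent with symmetric differential — this is exactly where the standing assumption that the network adequately reproduces the data enters. Once $J_{\mathcal{E}}=J_{\mathcal{D}}^{\dagger}$ is in hand, the computation of the previous paragraph closes the proof.
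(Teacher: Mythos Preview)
Your route is genuinely different from the paper's, which is a three-line manipulation. The paper reads the chain rule off the hypothesis $\mathcal{D}\mathcal{E}=\mathbb{I}$ to get $J_{\mathcal D}J_{\mathcal E}=\mathbb{I}$ (and its transpose $J_{\mathcal E}^{T}J_{\mathcal D}^{T}=\mathbb{I}$), multiplies these to obtain $J_{\mathcal E}^{T}\bigl(J_{\mathcal D}^{T}J_{\mathcal D}\bigr)J_{\mathcal E}=\mathbb{I}$, and then left- and right-multiplies by $J_{\mathcal E}^{T\dagger}$ and $J_{\mathcal E}^{\dagger}$ to strip off the outer factors, arriving directly at $J_{\mathcal D}^{T}J_{\mathcal D}=J_{\mathcal E}^{T\dagger}J_{\mathcal E}^{\dagger}=(J_{\mathcal E}J_{\mathcal E}^{T})^{\dagger}$. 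No SVD is invoked, and the intermediate claim $J_{\mathcal E}=J_{\mathcal D}^{\dagger}$ is never made or needed.

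That said, the concern you flag in your third paragraph is real and applies equally to the paper's own argument: with $n>m$ the composite $\mathcal{D}\circ\mathcal{E}:\mathbb{R}^{n}\to\mathbb{R}^{n}$ has rank at most $m$, so the hypothesis $\mathcal{D}\mathcal{E}=\mathbb{I}$ on all of $\mathbb{R}^{n}$ is vacuous and $J_{\mathcal D}J_{\mathcal E}=\mathbb{I}_{n}$ can never hold literally. The paper does not address this and simply treats the identity formally. Your attempt to restrict to the data manifold and then argue that a well-trained autoencoder realises the \emph{orthogonal} projection (thereby forcing $J_{\mathcal E}=J_{\mathcal D}^{\dagger}$ rather than an arbitrary left inverse) is a more honest account of what is actually required for the conclusion to hold. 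So your approach buys rigour at the price of an explicit extra geometric assumption and a longer SVD computation; the paper's buys brevity at the price of working from a hypothesis that cannot be satisfied as written. For the proposition exactly as stated, though, the paper's direct pseudoinverse manipulation is the intended proof and is considerably shorter than yours.
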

\begin{proof}
Since $\mathcal{D} \mathcal{E} = \mathbb{I}$, then $J_\mathcal{D} J_\mathcal{E} = \mathbb{I}$ and $J_\mathcal{E}^T J_\mathcal{D}^T = \mathbb{I}$. Therefore we can write
\begin{equation*}
J_\mathcal{E}^T J_\mathcal{D}^T J_\mathcal{D} J_\mathcal{E}  = \mathbb{I}
\end{equation*}
Multiplying to the left by the pseudoinverse $J_\mathcal{E}^{T \dagger}$  and to the right by $J_\mathcal{E}^{\dagger}$ then gives the thesis
\begin{equation*}
J_\mathcal{D}^T J_\mathcal{D} = J_\mathcal{E}^{T \dagger} J_\mathcal{E}^\dagger = (J_\mathcal{E} J_\mathcal{E}^T )^{\dagger}
\end{equation*}
\end{proof}
From a computational standpoint, a significant implication of this proposition is that calculating $J_\mathcal{E}$ might require less memory compared to $J_\mathcal{D}$. This depends on the specific VAE structure and the automatic differentiation algorithm used for computing the Jacobian matrices.
It is also worth noting that, in practice, $J_\mathcal{E}$ is, strictly speaking, always full rank since its eigenvalues are never truly zero, but rather very small. Consequently, $J_\mathcal{E} J_\mathcal{E}^T$ is also a full rank matrix. To avoid numerical instability when computing $(J_\mathcal{E} J_\mathcal{E}^T )^{\dagger}$, we can instead determine its rank by calculating the numerical rank of $J_\mathcal{E} J_\mathcal{E}^T$. Given these considerations, we employ the matrix $J_\mathcal{E} J_\mathcal{E}^T$ in \Cref{algo:ID_estimation} instead of $g = J_\mathcal{D}^T J_\mathcal{D}$.
\begin{remark}
We have observed that, due to numerical errors, it may happen that the smallest eigenvalues turn out to be negative.
This is merely due to the floating point precision used in the computations (usually, 32 bits) and, in any case, ``negative'' eigenvalues are to be considered ``null''. 
\end{remark}

\begin{example}\label{ex:ID_estimate}
We illustrate the proposed method for estimating the ID considering two simple manifolds, a circle and a paraboloid. 
We use in both cases $\beta$-VAE architectures with a latent space of dimension $3$, setting $\beta=1$ and $\beta=0.1$, respectively. The VAEs are trained to learn in the first case a representation of a unit circle centered at the origin and in the second case the paraboloid $z=x^2+y^2$ with $x,y \in [-1,1]$. 
The results are presented in \Cref{fig:dimension_estimate_simple}.
We apply this technique to a more complex inverse problem in \Cref{sec:Numerical_experiments}. 

\begin{figure}[h!]
\centering
\begin{subfigure}[t]{0.5\textwidth}
		\centering
		\includegraphics[width=.95\linewidth]{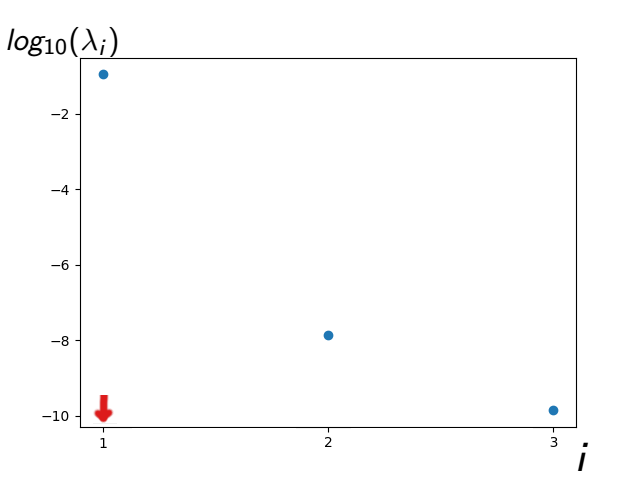}
		\caption{Eigenvalues of the pullback metric for the circle}
		\label{fig:dimension_estimate_ex1}
	\end{subfigure}%
	\begin{subfigure}[t]{0.5\textwidth}
		\centering
		\includegraphics[width=0.95\linewidth]{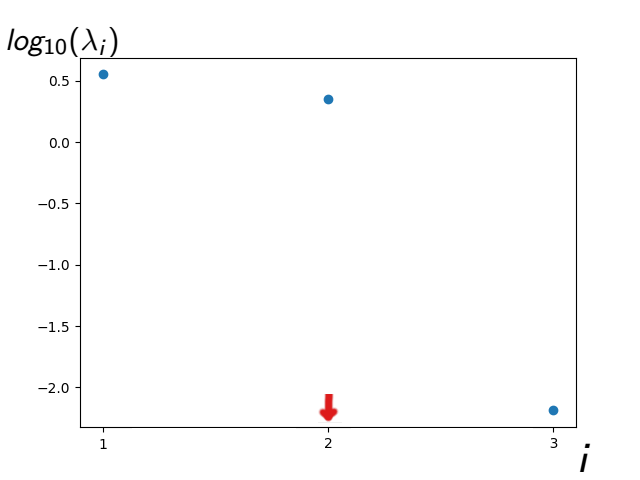}
		\caption{Eigenvalues of the pullback metric for the paraboloid.}
		\label{fig:dimension_estimate_ex2}
	\end{subfigure}
\caption{Plots of the eigenvalues of the pullback metric for the circle (a) and the paraboloid (b). In panel (a), the second eigenvalue is significantly smaller (by a factor of $10^7$) than the first. Consequently, our method estimates the correct ID of $1$ for the circle. For the paraboloid in panel (b), the estimated ID is the correct value $2$, evident from the large gap observed between the second and third eigenvalues.}
\label{fig:dimension_estimate_simple}
\end{figure}
\end{example}
We emphasize that the dimension estimated by our method is contingent upon how well the VAE is trained to reproduce the original manifold. In particular, networks that yield poor reconstructions could provide an unreliable estimate. For instance, employing high values 
of~$\beta$ in a $\beta$-VAE or applying Lasso regularization too aggressively can result in an underestimate of the manifold's dimension, as illustrated in \Cref{fig:bad_estimate}.
\begin{figure}[h!]
\centering
\begin{subfigure}[t]{0.5\textwidth}
		\centering
		\includegraphics[width=1.\linewidth]{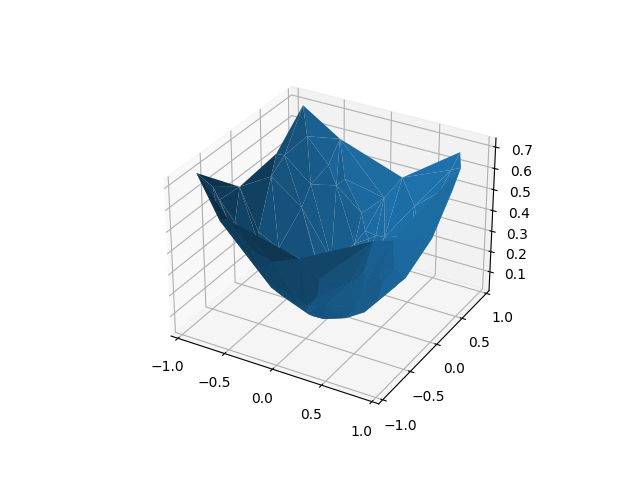}
		\caption{The adequate value of $\beta=0.1$ \\ yields a good reconstruction of the paraboloid.}
		\label{fig:bad_estimate_ex1}
	\end{subfigure}%
	\begin{subfigure}[t]{0.5\textwidth}
		\centering
		\includegraphics[width=1.\linewidth]{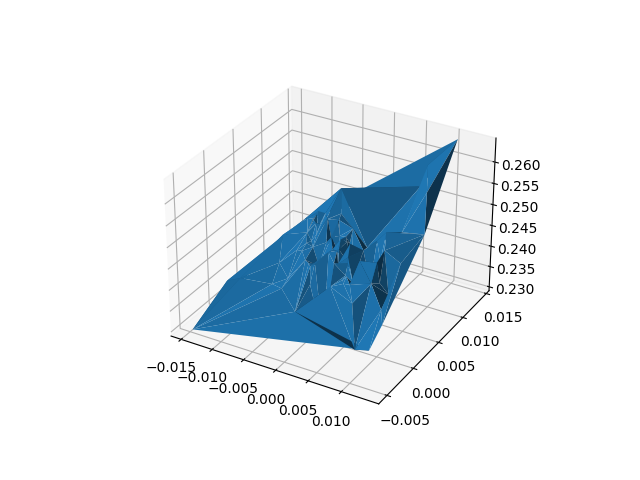}
		\caption{
        \noindent
Using $\beta=10$ yields a very poor reconstruction of the paraboloid, where only the region near the vertex (zoomed in the panel) is learned, the result outside being completely useless. It is even unclear if the manifold represented by the VAE is a surface or a complex $3D$ shape.}
		\label{fig:bad_estimate_ex2}
	\end{subfigure}
	\begin{subfigure}[t]{0.5\textwidth}
		\centering
		\includegraphics[width=0.95\linewidth]{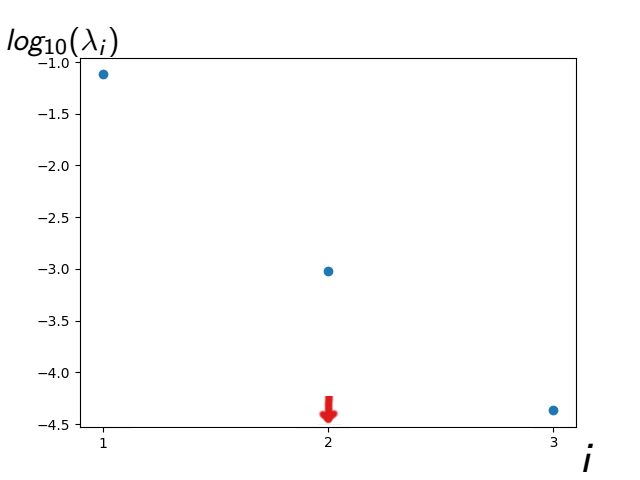}
		\caption{Eigenvalues of the pullback metric for the poor reconstruction of the paraboloid shown in panel (b).}
		\label{fig:bad_estimate_ex3}
	\end{subfigure}
\caption{Reconstruction of the paraboloid $z=x^2+y^2$ by two different VAEs obtained setting $\beta=0.1$ (panel (a)) and $\beta=10$ (panel (b)),
respectively. Panel c): Plot of the eigenvalues of the pullback metric for the $\beta$-VAE of case (b). 
Compared to \Cref{fig:dimension_estimate_ex2} -- corresponding to the VAE which gives the reconstruction in (a) -- it is not apparent a gap between the eigenvalues.}
\label{fig:bad_estimate}
\end{figure}

\section{Application to the inverse problem in CT imaging}
\label{section:application_to_inverse_problem}

Computed tomography (CT) is a prototypical inverse problem where the goal is to reconstruct an image of an object’s internal structure from a series of external X-ray projections which correspond to line integrals of the object's attenuation coefficients. Namely, given a 2D slide of a 3D object, the line--averaged 
X-ray attenuation $u(x)$ along the projection rays is given by the Radon transform
$$
Ru(\theta,t)=\int_{\ell}u(x)\, dx
$$
where $\ell=\{ x=(x_1,x_2) \, | \, 
(x_1 \cos(\theta)+x_2\sin\theta)=t\}$, $ t \in [-1,1],
\theta \in [0,\pi)$, is the projection ray.
The collection of the projections for several angles
is called sinogram. One could reconstruct
the field $u(x)$ from the sinogram by inverting
the Radon transform, i.e. computing $R^{-1}$. 
In practice, however, one disposes only of a discrete and often noisy set of projections taken at limited angles. Due to these issues, the inversion process is typically ill-posed and ill-conditioned, requiring regularization or prior information to obtain stable and accurate reconstructions. 
Deep learning approaches have the potential to significantly advance CT reconstruction (see, e.g., \cite{Wang2020,Koetzier2023}). Neural networks can be employed in various ways: as post-processing denoisers, end-to-end mappings from sinograms to images or can be embedded within iterative schemes (e.g., learned unrolling). These data-driven techniques leverage large datasets to implicitly learn image priors, leading to reconstructions with improved artifact suppression, noise robustness, and resolution. 

\null

\noindent In this work we do not aim at proposing
a new workflow for CT reconstruction, but we take 
this problem as a paradigm of inverse problem to explore the benefits of disposing of a computational technique for ID estimation. 
Namely, we consider the neural network model introduced in~\cite{Benfenati2024}. 
In this model,
the observations (here, the sinograms)
and the generating physical field (here,
the absorption coefficient represented as a pixelized field) are treated along two
separated autoencoder pathways, which 
are bridged via a third network 
that learns the image-to-observation relation (here, the Radon transform) in the respective latent spaces.
More in detail, we consider the following subnetworks relevant to our discussion (see also \Cref{fig:full_networks} and refer to the Appendix):
\begin{itemize}
\item $\beta$-VAEs (``Whitney embedders'') $\mathcal{W}_\mathcal{S}$ and $\mathcal{W}_\mathcal{I}$  composed
of encoder $\mathcal{E}_\mathcal{S}$ ($\mathcal{E}_\mathcal{I}$) and decoder $\mathcal{D}_\mathcal{S}$ ($\mathcal{D}_\mathcal{I}$). They perform a first gross dimensional reduction of the input images and sinograms, respectively, allowing to significantly reduce the working dimension  
\item  mixtures of VAEs $\mathcal{M}_\mathcal{I}
^{i}$ and $\mathcal{M}_\mathcal{S}^{i}$, for $i=1,
\dots K$, which build a collection of local charts for the embedding space of the images (right subnetwork in the inset of  \Cref{fig:full_networks}) and the
embedding space of the sinograms (left subnetwork in the inset of \Cref{fig:full_networks}). 
The number of charts $K$ is a hyperparameter, whilst ID estimation of the sinogram and image manifolds 
is performed across the two Whitney embedders
using the metric as described above 
\end{itemize} 
In addition, fully connected networks $\widetilde{\mathcal{R}}$
and $\widetilde{\mathcal{R}}^{-1} $ perform the data-driven Radon transform across the latent spaces produced by the encoders $\mathcal{E}_\mathcal{I}$ and 
$\mathcal{E}_\textbf{S}$. Specifically, given a $d \times d$ image $x$ and
$d \times d$ sinogram $y$,
we define 
$$\mathcal{E}_\mathcal{S} (\mathcal{R}(x)) \approx \widetilde{\mathcal{R}} (\mathcal{E}_\mathcal{I}),
\qquad 
\mathcal{E}_\mathcal{I} (\mathcal{R}^{-1}(y)) \approx \widetilde{\mathcal{R}}^{-1} (\mathcal{E}_\mathcal{S}(y))$$
Notice that
we do not build the networks $\widetilde{\mathcal{R}}$ and $\widetilde{\mathcal{R}}^{-1}$ directly between the local charts. This is because, in general, data points belonging to image chart $U_I$ are not necessarily mapped by the Radon transform exclusively to the same sinogram chart $U_S$. 
Instead, different image points in the same chart may correspond to different sinogram charts. Actually, this is
a general behavior in differential geometry, not related to the
specific net architecture. 

\begin{figure}
\centering
\includegraphics[scale=.94,angle=0]{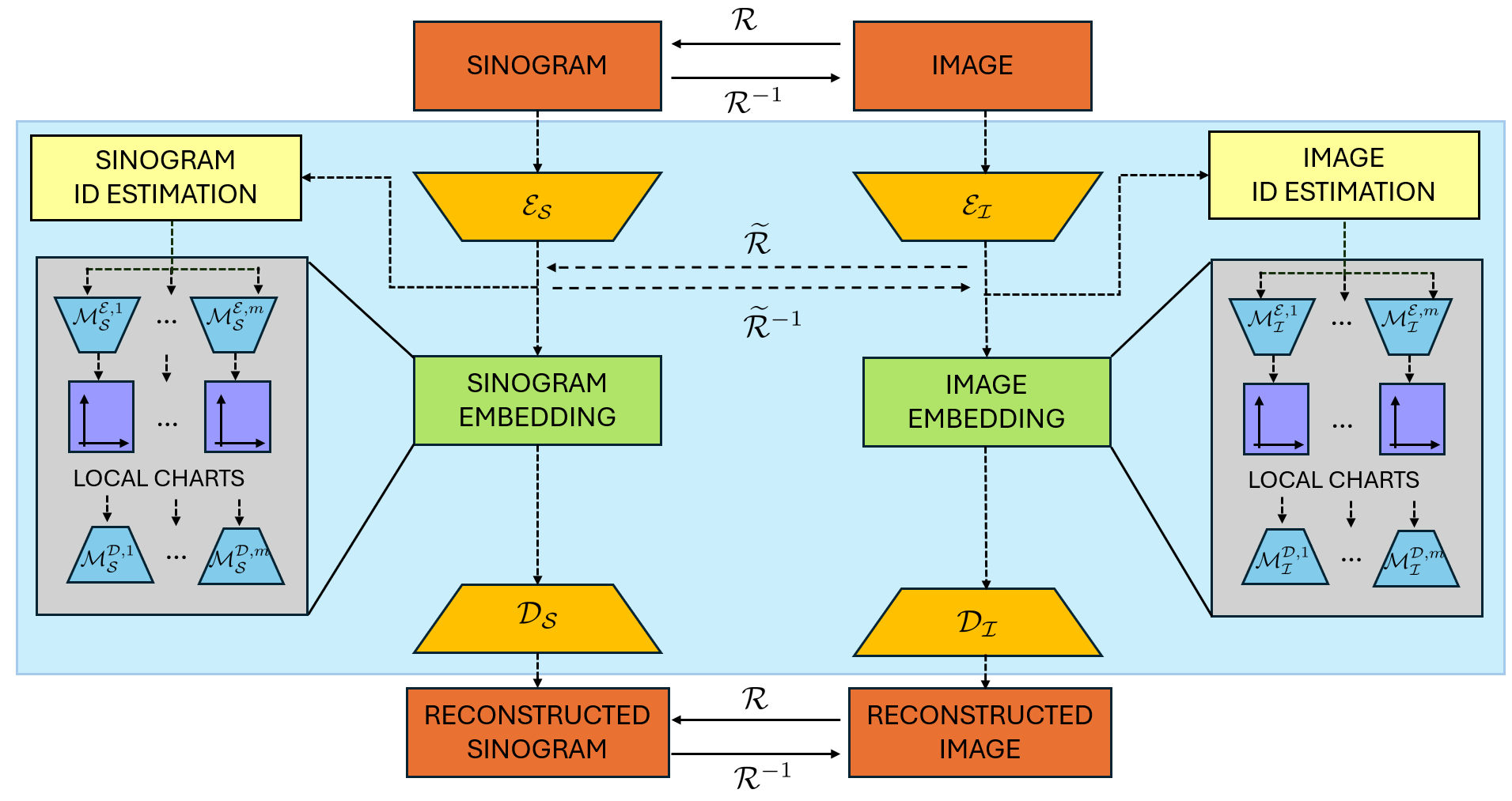}
\caption{Architecture of the network for CT reconstruction. First a ``Whitney embedding'' is performed by the encoders  $\mathcal{E}_\mathcal{S}$ and $\mathcal{E}_\mathcal{I}$ for the sinograms and the images, respectively. The two sub--networks in the insets at the left and right are employed to build the local charts. The left network produces the local charts for the sinograms by means of the encoder $\mathcal{M}_\mathcal{S}^\mathcal{E}$ and the decoder $\mathcal{M}_\mathcal{S}^\mathcal{D}$,
while the encoder $\mathcal{M}_\mathcal{I}^\mathcal{E}$ and the decoder $\mathcal{M}_\mathcal{I}^\mathcal{D}$ in the right part builds the atlas for the sinograms. The fully connected networks $\widetilde{\mathcal{R}}$ and $\widetilde{\mathcal{R}}^{-1}$ realize the push--forward of the Radon and inverse Radon transforms maps $\mathcal{R}$ and $\mathcal{R}^{-1}$.  Data flow along the pathways signaled by the arrows.}
\label{fig:full_networks}
\end{figure}



\null

 \noindent Training is performed separately
 for the two embedding networks $\mathcal{W}_I$ and $\mathcal{W}_S$. Namely, we minimize a modification of the $\beta$-VAE loss \cite{higgins2017betavae} that includes Lasso regularization to encourage the network to select the most important features \cite{Allerbo2021NonlinearSD,JMLR:v22:20-848}. Let $\hat{x}$ be the reconstruction of the network and $p(z|x)$ be the distribution of the posterior. The loss function is
 defined as\begin{equation}\label{eq:embedding_loss}
\mathcal{L}(x,\hat{x}) = \dfrac{\alpha}{2} \ T(\hat{x},x) + \beta \ \text{KL}(p(z|x) \ | \ \mathcal{N}(0,1) ) + \gamma \sum_{\theta \in \Theta} \| \theta \|_1
\end{equation}
where $\Theta$ denotes the weights of the network, $\gamma$ is the parameter of the Lasso regularization and $T(\cdot,\cdot)$ is the Tanimoto distance \cite{Lipkus1999APO}, a dissimilarity coefficient widely employed in the screening of molecules in chemistry \cite{df439fc78ddb498d9eb828a8974212dd}, in the analysis of similarity of video sequences \cite{Tashlinskii2017}, and in image classification \cite{PRAVEENKUMAR2022100063} and segmentation \cite{SPIE2014}. 
The reason behind the choice of the Tanimoto distance as the reconstruction error in \Cref{eq:embedding_loss} is that it is  more sensitive to small changes compared to the Euclidean one \cite{TanimotoPres}, a useful feature in the presence of thin lines as in 
our dataset. 
Namely, given $x,y \in \mathbb{R}^n$, $n \in \mathbb{N}$, the Tanimoto distance is defined as:
\begin{equation*}
T(x,y) = 1 - \dfrac{x \cdot y}{\|x\|^2+\|y\|^2-x \cdot y}
\end{equation*}
A brief computation reveals that for $D \times D$ images, the Euclidean distance between two grayscale pictures $x$ and $y$ can be bounded by the following expression (Tanimoto coefficient):
\begin{equation*}
\| x - y \|^2_2 \leq 2 D^2 \dfrac{T(x,y)}{1-T(x,y)}
\end{equation*}
Therefore, for $\alpha > 2d^2$, the loss function as per \Cref{eq:embedding_loss} (without the Lasso regularization term) bounds the standard $\beta$-VAE loss function defined using the $L^2$ Euclidean loss. 
\section{Numerical experiments}\label{sec:Numerical_experiments}
\subsection{Dataset}

The dataset we use in our experiments is a simplified version of the COULE (Contrasted Overlapping Uniform Lines and Ellipses) dataset~\cite{COULE}, a collection of images designed to train and test neural networks for CT applications in medical imaging. Each sample is a $128 \times 128$ gray-scale image containing two flat circles of different colors and a white line of constant thickness, as shown in \Cref{fig:embedding_reconstructions_a}. The corresponding
sinograms are shown in \Cref{fig:embedding_reconstructions_c}. For this dataset images have ID$=12$ since they are fully described by a minimum of $12$ coordinates (positions of the centers, radii, and color of the two circles; position of the midpoint, inclination, and color of the line). 
We train the networks $\mathcal{W}_I$ and $\mathcal{W}_S$ for $1000$ epochs, setting $\alpha = 2D^2$, $\beta = 25$, and $\gamma = 0.0001$ in \Cref{eq:embedding_loss}. The dimension of the embedding space is $25$. The dataset is first normalized to the range $[0,1]^{D^2}$ and then, to avoid overfitting, a slight jitter filter is applied to the data. In \Cref{fig:embedding_reconstructions}(b,d) 
we show the reconstructions of the images and sinograms obtained by $\mathcal{W}_\mathcal{I}$ and $\mathcal{W}_\mathcal{S}$, respectively. 

\begin{figure}[h!]
\begin{center}
	\begin{subfigure}[t]{0.5\textwidth}
		\centering
		\includegraphics[width=.95\linewidth]{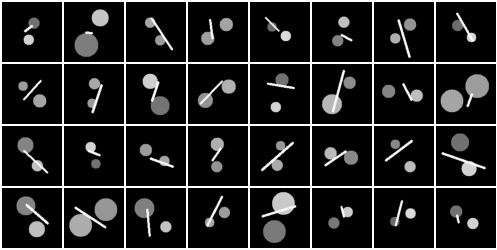}
		\caption{}
		\label{fig:embedding_reconstructions_a}
	\end{subfigure}%
	\begin{subfigure}[t]{0.5\textwidth}
		\centering
		\includegraphics[width=0.95\linewidth]{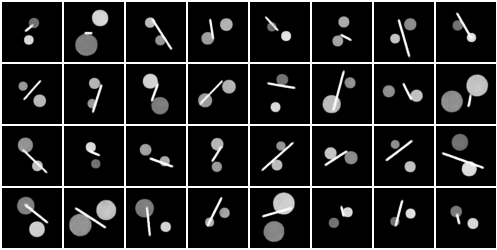}
		\caption{}
		\label{fig:embedding_reconstructions_b}
	\end{subfigure}
	
	\begin{subfigure}[t]{0.5\textwidth}
		\centering
		\includegraphics[width=.95\linewidth]{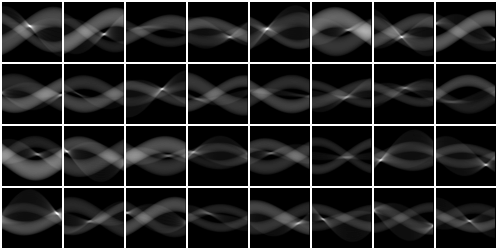}
		\caption{}
		\label{fig:embedding_reconstructions_c}
	\end{subfigure}%
	\begin{subfigure}[t]{0.5\textwidth}
		\centering
		\includegraphics[width=0.95\linewidth]{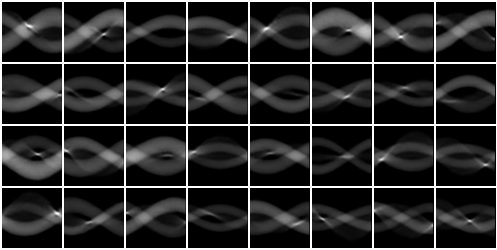}
		\caption{}
		\label{fig:embedding_reconstructions_d}
	\end{subfigure}
\caption{Top panel: Input images (a) and  corresponding reconstructions obtained by the embedding network $\mathcal{W}_\mathcal{I}$ (b). Bottom panel: Input sinograms (c) and corresponding reconstructions obtained by the embedding network $\mathcal{W}_\mathcal{S}$ (d).}
\label{fig:embedding_reconstructions}
\end{center}
\end{figure}

In \Cref{fig:radontilde} we show for a sample in the dataset (left) the reconstruction (right) obtained from the transformation $\mathcal{D}_\mathcal{I} \circ \widetilde{\mathcal{R}}^{-1} \circ \widetilde{\mathcal{R}} \circ \mathcal{E}_\mathcal{I}$. 
\begin{figure}[h!]
\centering
\includegraphics[scale=0.23]{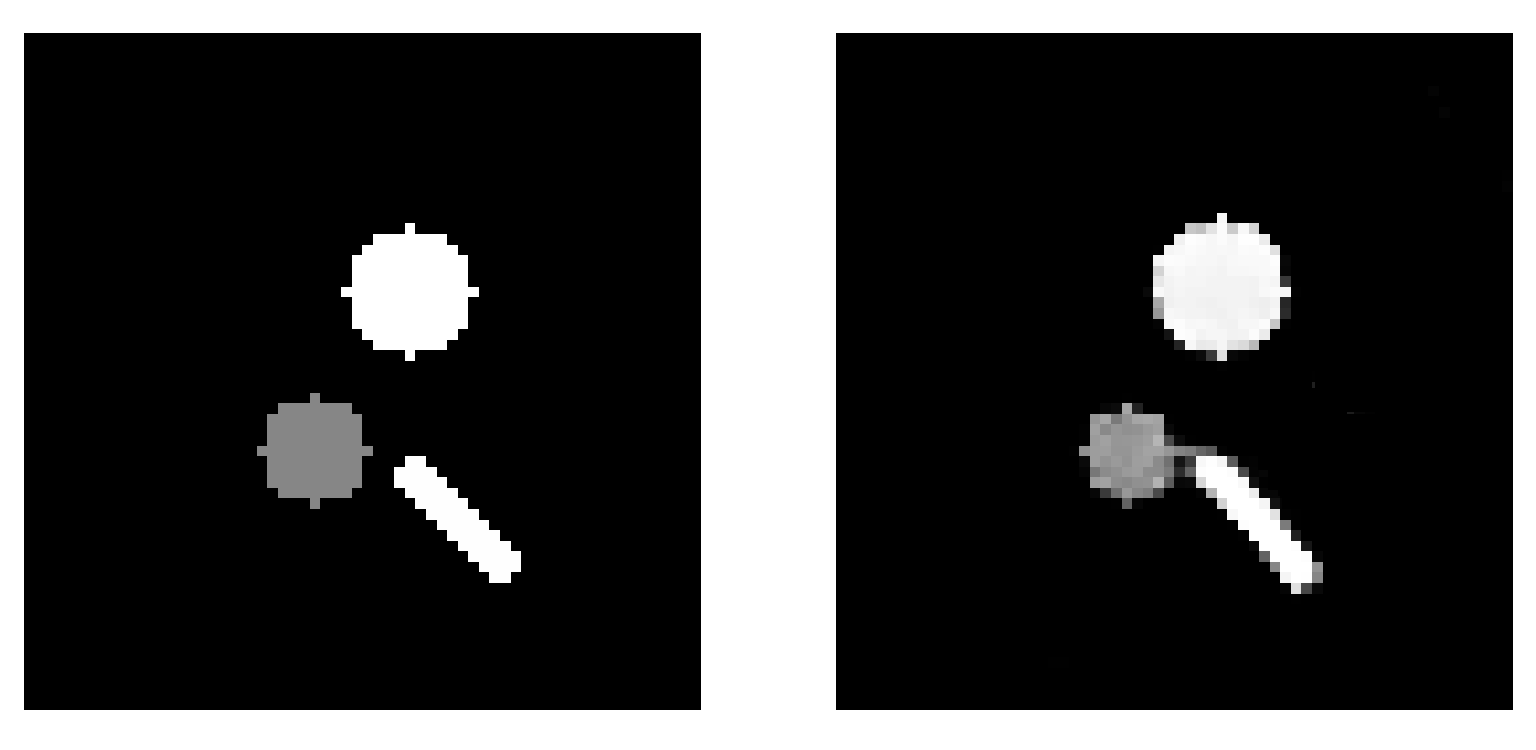}
\caption{Original image (left) and reconstruction (right) obtained from the transformation $\mathcal{D}_I \circ \widetilde{\mathcal{R}}^{-1} \circ \widetilde{\mathcal{R}} \circ \mathcal{E}_I$ for a sample in 
the COULE dataset.}
\label{fig:radontilde}
\end{figure}

The code for the numerical experiments can be found at \url{https://github.com/alessiomarta/Intrinsic_Data_Structures_via_Singular_Metrics_in_VAEs}.

\subsection{Estimate of the dataset ID}\label{subsec:dimension_estimate}
Before training the mixture of $\beta$-VAEs to construct the local charts, we first  estimate the intrinsic dimension of the data manifold. To do this, we follow the procedure outlined in \Cref{section:dimension_estimate_general}, which involves computing and then studying the eigenvalues of the singular metric $g = J_\mathcal{E} J_\mathcal{E}^T$. Our analysis reveals that the image manifold is of dimension $12$ (matching the correct dimension of our dataset), while the sinogram manifold is of dimension~$11$, as depicted in \Cref{fig:dimension_estimate}.
We observe that the networks $\mathcal{W}_I$ and $\mathcal{W}_S$ produce data representations with differing dimensions. This is presumably a combined consequence of the 
physics of the problem, the discrete number of measurements in the sinogram and 
the learning process of the networks.
We also compare the ID estimation of our method with other algorithms implemented in the {\tt scikit-dimension} package~\cite{scikit_dimension}. The results, reported in \Cref{tab:cfr_id}, show that lPCA overestimates the ID for both datasets, providing a reasonable approximation for the sinograms but a very poor estimate for the images. CorrID, on the other hand, consistently underestimates the ID for both datasets. MLE provides a good ID estimate for the image dataset but a lower ID for the sinogram dataset. Our method accurately predicts the correct ID for the image dataset and provides a good approximation for the sinogram dataset.

\begin{figure}[h]
\centering
\begin{subfigure}[t]{0.45\textwidth}
		\centering
		\includegraphics[width=.95\linewidth]{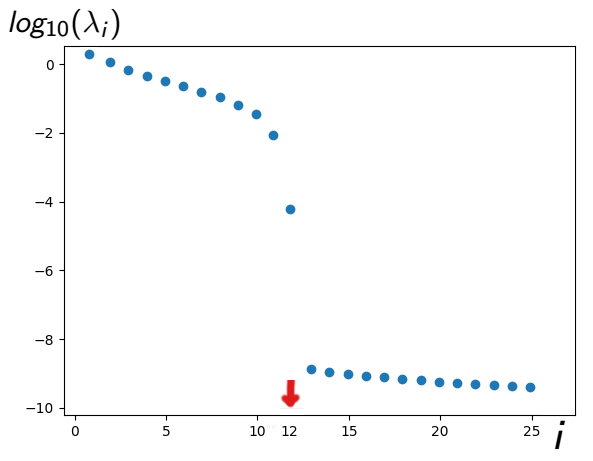}
		\label{fig:dimension_estimate_img}
	\end{subfigure}%
	\begin{subfigure}[t]{0.45\textwidth}
		\centering
		\includegraphics[width=0.95\linewidth]{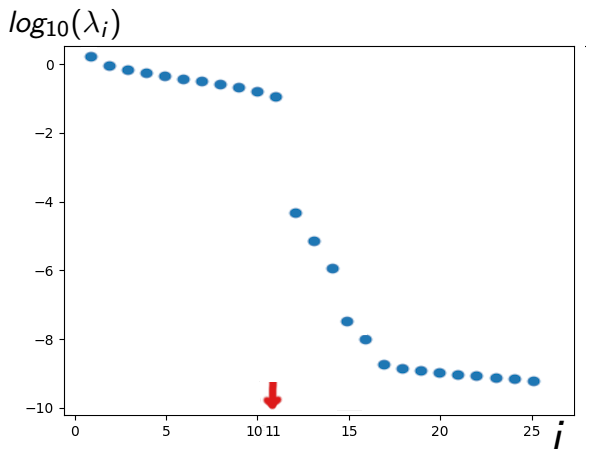}
		\label{fig:dimension_estimate_sin}
	\end{subfigure}
\caption{Plot of the eigenvalues of the pullback metric for an image (left) and its corresponding sinogram (right) from the COULE dataset. 
The red arrow indicates the dimension at which a significant gap in the eigenvalues occurs. In the left panel, this gap appears at the $12^{th}$ eigenvalue, leading to an estimated ID of~$12$, which is the correct dimension of the dataset. The  estimated ID of the sinogram is $11$.}
\label{fig:dimension_estimate}
\end{figure}

\begin{table}[h]
    \centering
    \begin{tabular}{|c|c|c|}
    \hline 
    ID Estimation method & Images & Sinograms \\ 
    \hline 
    lPCA & 29 & 13  \\ 
    \hline 
    MLE & 13 & 8 \\ 
    \hline 
    CorrID & 8 & 5 \\ 
    \hline 
    Ours & 12 & 11 \\ 
    \hline 
    \end{tabular} 
    \caption{Comparison of the estimated ID for image and sinogram datasets, using our proposed method vs local PCA (lPCA), Maximum Likelihood Estimator (MLE), and the correlation dimension method (CorrID). The estimates were obtained using implementations of these algorithms from \cite{scikit_dimension} and for all algorithms, the ID estimation was performed using a sample of 500 elements.}
    \label{tab:cfr_id}
\end{table}

\subsection{Local charts with mixtures of VAEs}

\noindent
As already discussed in above, there is generally no definitive guidance on the optimal number of charts required to construct an atlas. In practical applications, this choice should represent a trade-off between computational efficiency and model flexibility. While using too few charts may lead to a poor manifold representation, employing too many charts is computationally expensive.
In our specific case, the data manifold for our simplified COULE image dataset is diffeomorphic to $S^1 \times S^1 \times (0,1)^{10}$. Thus, we know \textit{a priori} that the minimum number of local charts is $4$. We consequently used $4$ networks in the mixture of VAEs, with the manifold dimension set to $12$, as estimated in the previous section. However, for the corresponding sinograms, we lack theoretical suggestions on the number of charts, as transforms generally do not preserve the number of local charts. We again chose to use $4$ charts, which yielded good results in our experiments. Should the mixture of VAEs exhibit excessively high loss (indicating poor reconstruction quality of the full network), it may be beneficial to experiment with different numbers of local charts.
Fig.\ref{fig:latent_reconstructions} shows an example of the local charts for the images and sinograms and a few random samples generated by the chart latent spaces. These atlases can then be used to solve a CT imaging inverse problem employing, for example, the Riemannian gradient descent scheme proposed in \cite{ManifoldVAEs}.

\begin{figure}[t]
\begin{center}
	\begin{subfigure}[t]{0.35\textwidth}
		\centering
		\includegraphics[width=.95\linewidth]{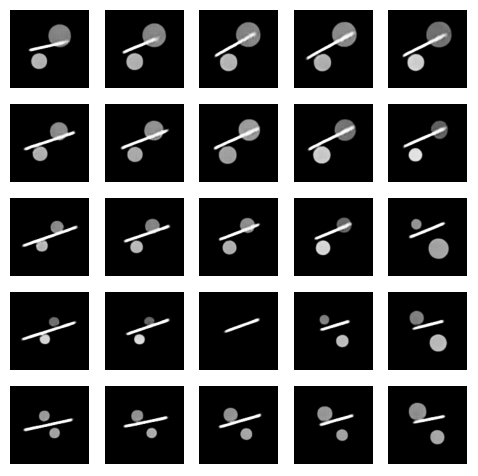}
		\caption{}
		\label{fig:local_chart_img}
	\end{subfigure}%
	\begin{subfigure}[t]{0.35\textwidth}
		\centering
		\includegraphics[width=0.95\linewidth]{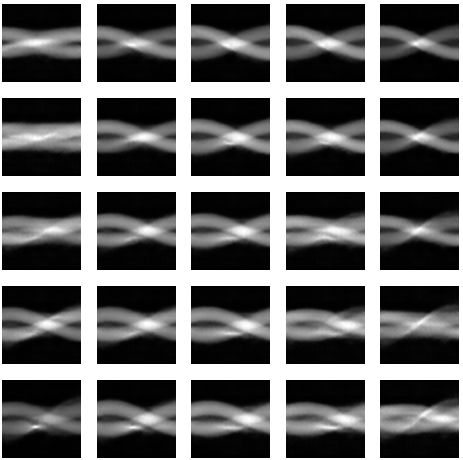}
		\caption{}
		\label{fig:local_chart_sin}
	\end{subfigure}
	
	\begin{subfigure}[t]{0.35\textwidth}
		\centering
		\includegraphics[width=.95\linewidth]{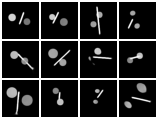}
		\caption{}
		\label{fig:samples_img}
	\end{subfigure}%
	\begin{subfigure}[t]{0.35\textwidth}
		\centering
		\includegraphics[width=0.95\linewidth]{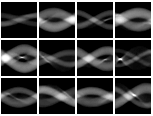}
		\caption{}
		\label{fig:samples_sin}
	\end{subfigure}
\caption{Local charts for the images (a) and sinograms (b). Starting from the left-bottom corner, in both cases the different pictures correspond to the
projection on the first two coordinates of a local chart $(0.2 \cdot j, 0.2 \cdot k)$ with $j,k = 0,\ldots,4$. Random samples generated in the local charts latent space for images (c) and sinograms (d).}
\label{fig:latent_reconstructions}
\end{center}
\end{figure}

\newpage
\clearpage

\subsection{Network pruning on the learned manifold}

A visual inspection of the outputs from the last convolutional blocks of the encoder $\mathcal{E}_I$ and the first block of the decoder $\mathcal{D}_I$ reveals that many learned filters exhibit redundancy highlighting similar features or convey no meaningful information, as shown in the top panel of \Cref{fig:weights_analysis}. Redundancy of feature maps in convolutional layers is a well-known phenomenon associated with an inefficient capacity utilization of the network \cite{Hao2021ManipulatingIF,bdcc7040159,LU2024104547}. It is worth noting that in the case under analysis this can be due to the simplicity of the dataset. For example in \Cref{fig:last_encoder} we can see that some filters learn to detect the presence of an entire segment. In contrast, with more complex datasets, a single object in an image may be broken up in more basic shapes, with different filters detecting the presence of different features in the same object \cite{9607765}. The redundancy of the filters is also reflected by the large number of weights with a small $L^1$ norm (\Cref{fig:l1_encoder}) within the fully connected layers that receive the output of the encoder's last convolutional layer (refer to \Cref{fig:embedding_architecture}). This pattern holds true for the fully connected layers of decoder $\mathcal{D}_I$ as well (\Cref{fig:l1_decoder}). Similar results were observed from the inspection of the subnetworks processing the sinograms, $\mathcal{E}_S$ and $\mathcal{D}_S$. This analysis suggests that a significant number of weights can be pruned without detriment to reconstruction quality. Crucially, the practical limit for pruning can be identified by a noticeable change in the estimated intrinsic dimension, as this indicates a degraded representation of the data manifold.
We pruned the weights of the fully connected blocks in both the encoder $\mathcal{E}_I$ and the decoder $\mathcal{D}_I$ using PyTorch's \texttt{prune} module, performing a global unstructured pruning that removes weights with the lowest $L^1$ norm from each layer. The fraction of parameters pruned was varied from $p = 0$ to $p = 0.999$ in steps of $0.001$. We first observed a degradation in reconstruction quality beyond a pruning threshold of $p = 0.996$, which corresponds to retaining only approximately $3 \cdot 10^4$ of the most important weights (as detailed in \Cref{table:FC_weights} and \Cref{fig:rec_loss_pruning}).

\begin{table}[bht]
\centering
\begin{tabular}{|c|c|c|c|}
\hline 
Layer & \makecell{No. of weights \\ (original model)} & \makecell{No. of weights \\ (pruning ratio 0.996)} & \makecell{No. of weights \\ (pruning ratio 0.999)}\\ 
\hline 
1st FC layer of $\mathcal{E}_I$ & 4194560 & 16772 & 4194 \\ 
\hline 
2nd FC layer of $\mathcal{E}_I$ & 32896 & 129 & 35\\
\hline 
1st FC layer of $\mathcal{D}_I$ & 4352 & 35 & 13\\
\hline 
2nd FC layer of $\mathcal{D}_I$ & 33024 & 139 & 33 \\
\hline 
3rd FC layer of $\mathcal{D}_I$ & 4210688 & 16852 & 4213 \\
\hline 
\end{tabular}
\caption{Number of weights of the fully connected layers of $\mathcal{E}_I$ and $\mathcal{D}_I$ in the architecture of \Cref{fig:embedding_architecture} before and after pruning. The distribution of the $L^1$ norms of the weights of the first and the last layers before the pruning are reported in \Cref{fig:l1_encoder,fig:l1_decoder}, respectively. 
} 
\label{table:FC_weights}
\end{table}

\begin{figure}
  \begin{subfigure}[c]{.5\linewidth}
    \centering
    \includegraphics[width=\linewidth]{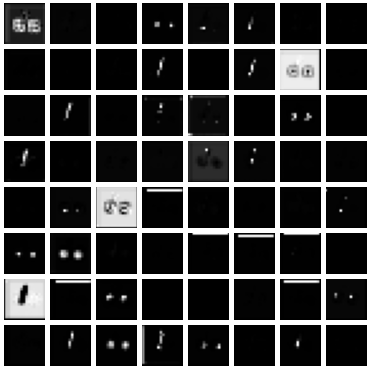}%
    \caption{}
    \label{fig:last_encoder}
  \end{subfigure}\hfill
  \begin{tabular}[c]{@{}c@{}}
    \begin{subfigure}[c]{.48\linewidth}
      \centering
      \includegraphics[width=\linewidth]{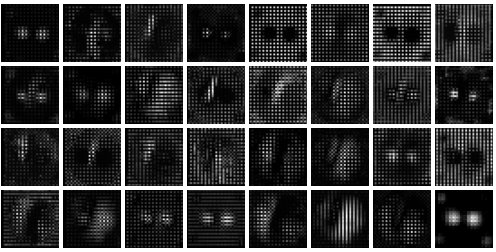}%
      \caption{}
      \label{fig:first_decoder}
    \end{subfigure}\\
    \noalign{\bigskip}%
    \begin{subfigure}[c]{.19\linewidth}
      \centering
      \includegraphics[width=\linewidth,page=2]{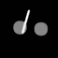}%
      \caption{}
          \label{fig:original_img}
    \end{subfigure}
  \end{tabular}
  
    \begin{subfigure}[c]{.5\linewidth}
    \centering
    \includegraphics[width=\linewidth]{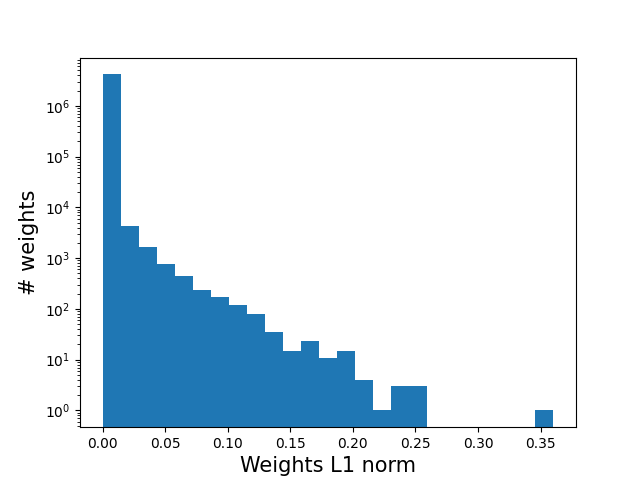}%
    \caption{}
    \label{fig:l1_encoder}
  \end{subfigure}
    \begin{subfigure}[c]{.5\linewidth}
    \centering
    \includegraphics[width=\linewidth]{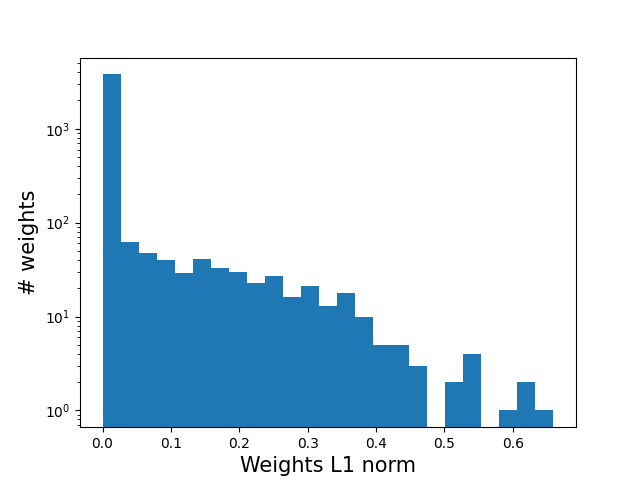}%
    \caption{}
    \label{fig:l1_decoder}
  \end{subfigure}\hfill
  
  \caption{Top panels: visual inspection of the outputs from the last convolutional layer of the encoder $\mathcal{E}_I$ (a) and the first convolutional layer of the decoder $\mathcal{D}_I$ (b), corresponding to the input image (c). The outputs reveal significant redundancy since several encoder outputs carry no meaningful data about the input image, focusing instead solely on the black background, which corresponds to a predominant number of pixels. Notice, however, how in both the encoder and the decoder the circles and the line are clearly separated, with certain filters dedicated to detecting/reconstructing the circles and others exclusively focusing on the line. This indicates that the network is disentangling the degrees of freedom related to the line from those concerning the circles. 
  Bottom panels: $L^1$ norms of the weights in the first fully connected layer of the encoder $\mathcal{E}_I$ (d) and the last fully connected layer of the decoder $\mathcal{D}_I$ (e). The distribution of $L^1$ norms in the other fully connected layers is similar. The redundancy observed in the data representation in (a) and (b) is reflected by a large number of weights possessing very small norm.}
      \label{fig:weights_analysis}
\end{figure}

The possibility to remove a significant number of weights without degrading reconstruction quality is consistent with the visual inspection of \Cref{fig:weights_analysis}. This figure demonstrates that in the encoder, most information about the original images is effectively carried by only two filters, those at positions (6,2) and (7,2) in the grid of \Cref{fig:last_encoder}. Beyond a pruning ratio of $p=0.996$, the quality of the reconstruction suddenly degrades, which is evident from both the reconstruction loss and the reconstructed images, as shown in \Cref{fig:rec_loss_pruning}. This restriction in the network's capacity for pruning ratios greater than $p=0.996$ is reflected by a change in the estimated intrinsic dimension, illustrated in \Cref{fig:dimension_pruning}.

\begin{figure}
\centering
  \begin{subfigure}[c]{.65\linewidth}
    \centering
    \includegraphics[width=\linewidth]{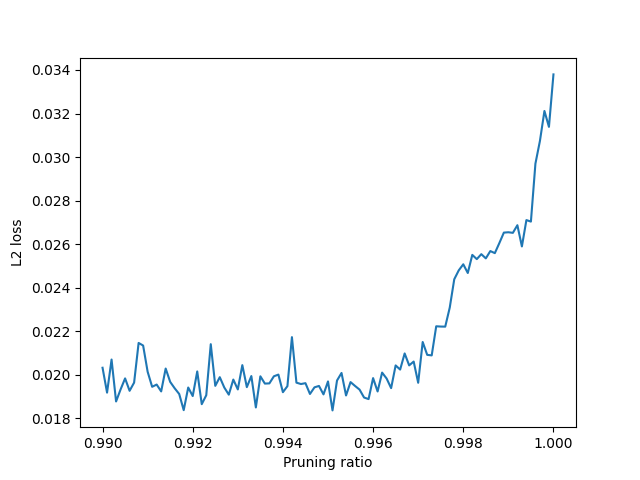}%
    \caption{}
    \label{fig:pruning_loss}
  \end{subfigure}
  \hfill
  
    \begin{subfigure}[c]{.3\linewidth}
    \centering
    \includegraphics[width=\linewidth]{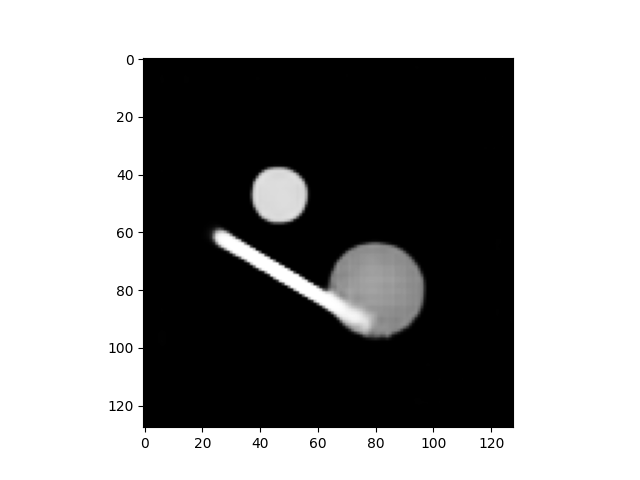}%
    \caption{}
    \label{fig:loss_pruned_pics_0}
  \end{subfigure}
      \begin{subfigure}[c]{.3\linewidth}
    \centering
    \includegraphics[width=\linewidth]{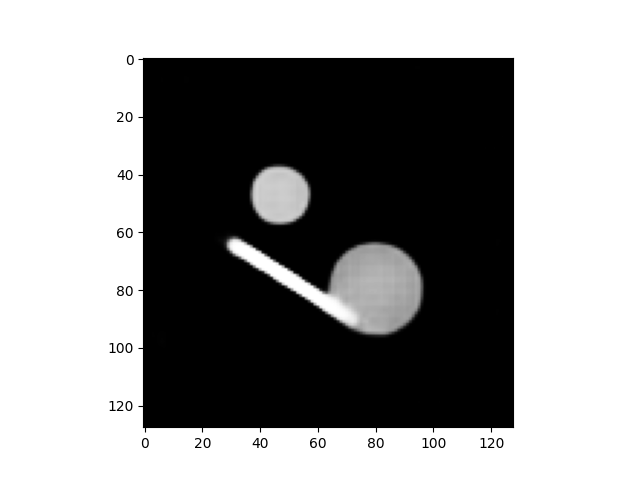}%
    \caption{}
    \label{fig:loss_pruned_pics_1}
  \end{subfigure}  
      \begin{subfigure}[c]{.3\linewidth}
    \centering
    \includegraphics[width=\linewidth]{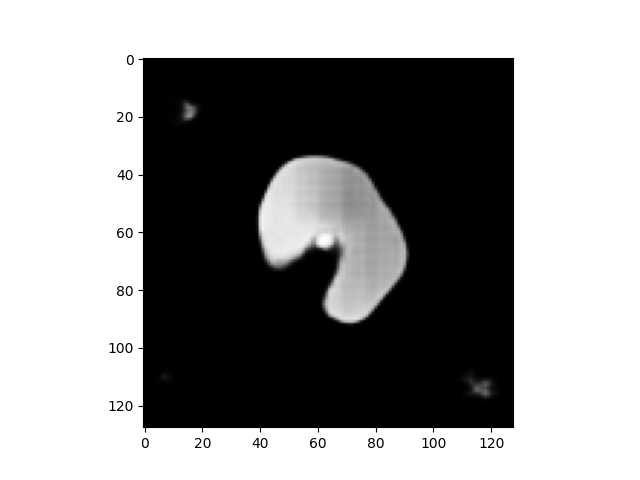}%
    \caption{}
    \label{fig:loss_pruned_pics_2}
  \end{subfigure}\hfill

\caption{Panel a): Reconstruction loss for different pruning ratio. Panels b), c) and d): Original image and its reconstructions with pruning ratios $0.996$ and $0.999$.}
\label{fig:rec_loss_pruning}
\end{figure}

\begin{figure}
\centering

    \begin{subfigure}[c]{.4\linewidth}
    \centering
    \includegraphics[width=\linewidth]{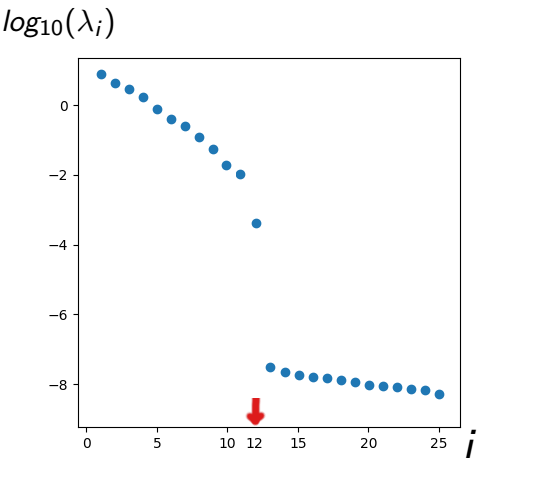}%
    \caption{}
    \label{fig:pruned_pics_original}
  \end{subfigure}
    \begin{subfigure}[c]{.4\linewidth}
    \centering
    \includegraphics[width=\linewidth]{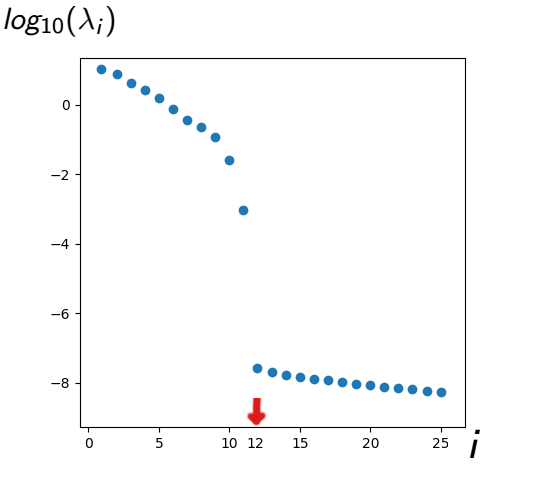}%
    \caption{}
    \label{fig:pruned_pics_0}
  \end{subfigure}
      \begin{subfigure}[c]{.4\linewidth}
    \centering
    \includegraphics[width=\linewidth]{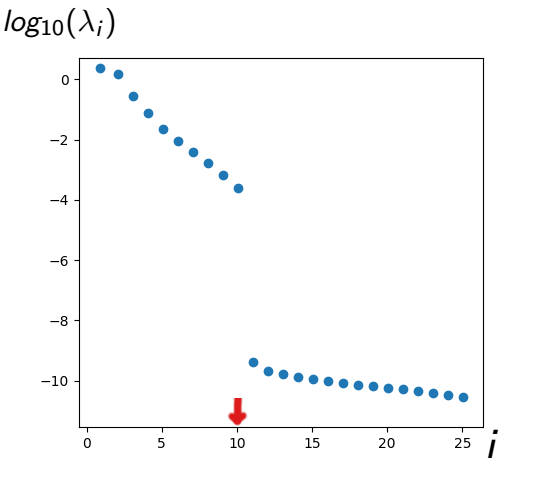}%
    \caption{}
    \label{fig:pruned_pics_1}
  \end{subfigure}  
      \begin{subfigure}[c]{.4\linewidth}
    \centering
    \includegraphics[width=\linewidth]{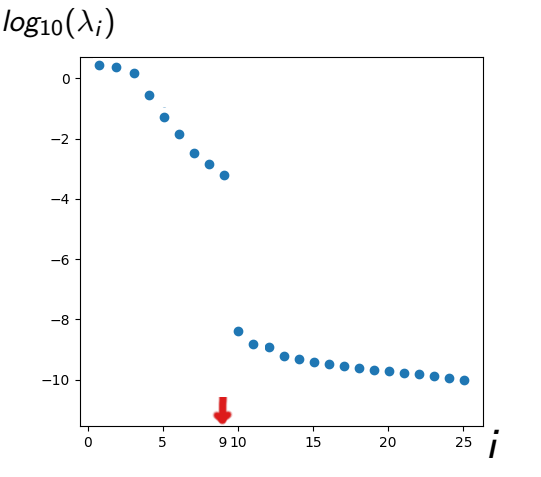}%
    \caption{}
    \label{fig:pruned_pics_2}
  \end{subfigure}\hfill

\caption{When the limiting pruning ratio is reached, the dimension of the data manifold learned by the network decreases. The panels show plots of the eigenvalues of the pullback metric for different pruning ratios. With no pruning (a) and with $p=0.9960$ (b), the estimated dimension is the correct value of $12$. However, increasing the pruning ratio leads to an estimated dimension of $10$ for $p=0.9999$ (c) and $9$ for $p=0.99995$ (d). The degradation in image reconstruction also results in a change in the shape of the eigenvalue curve.} 
\label{fig:dimension_pruning}
\end{figure}

\section{Conclusions}
Knowing the dimensionality of a data manifold is of interest in several fields of application. In the context of generative algorithms, this provides 
an estimate of the richness of generalizations since it quantifies the number of (linearly independent) ways in which an image can be altered while remaining in the space
of possible generations~\cite{Achilli24}.
This can be exploited to serve as a way
to generate in--manifold priors in 
inverse problems, which are notoriously
severely ill-conditioned. 

To perform ID estimation, in this work we have proposed to leverage
singular Riemannian metrics induced by Variational Autoencoders (VAEs). Our approach utilizes the pullback metric through the VAE decoder (or equivalently, encoder) to recover the local geometry of the underlying manifold, estimating its dimension through the numerical rank of the resulting metric tensor.
We demonstrated the effectiveness of our technique on simple low-dimensional toy datasets, such as a circle and a paraboloid, estimating correctly their known intrinsic dimensions. Subsequently, we applied this method to a more complex inverse problem in CT imaging, using a simplified version of the COULE dataset composed of circles and lines. Our analysis revealed that the image manifold has an intrinsic dimension of 12, while the sinogram manifold has an intrinsic dimension of 11, aligning well with theoretical expectations for the image data. A comparative study against existing ID estimation algorithms highlighted the superior accuracy of our method, particularly for the image dataset. We also discussed the practical considerations for constructing an atlas of local charts using mixtures of $\beta$-VAEs, noting the importance of selecting an appropriate number of charts based on the estimated ID and computational efficiency. Furthermore, our investigation into network pruning revealed that VAEs exhibit significant redundancy, allowing for substantial weight compression without immediate degradation in reconstruction quality. Crucially, we found that the intrinsic dimension estimate may serve as a robust indicator of the network's capacity and the quality of the learned manifold representation; beyond a critical pruning threshold, the estimated ID decreases, signaling a degradation in the manifold's fidelity.
Future work will further
investigate the idea that 
the pullback metric on the latent space provides information about the geometry of the manifold. 
Namely, one could explore the integration with diffusion models since one could   
treat Brownian motion directly on the manifold of reduced dimension manifold,
making these powerful methods computationally much more ``democratic''.

\vspace*{.8cm}
\noindent \textbf{Funding.} This research has been partially performed in the framework of the MIUR-PRIN Grant 20225STXSB ``Sustainable Tomographic Imaging with Learning and rEgularization'' and  GNCS Project CUP E53C24001950001 ``Metodi avanzati di ottimizzazione stocastica per la risoluzione di problemi inversi di imaging''. Paola Causin is member of the Italian group GNCS (Gruppo Nazionale Calcolo Scientifico) of INdAM.


\newcommand{\etalchar}[1]{$^{#1}$}

\clearpage

\appendix

\section{Architecture of the sub--networks}

\subsection{Embedding networks}

The architecture of the two embedding subnetworks $\mathcal{W}_S$ and $\mathcal{W}_I$ is reported in \Cref{fig:embedding_architecture}. Note that instead of using a max or average pooling layers to reduce the dimensions in the encoder, we downsample by means of convolutional layers that learn to retain the most important features. Similarly, in the decoder we upsample using a transposed convolutional layer. The dimension of the two embedding spaces must be way less than the dimension of the input since the role of $\mathcal{W}_S$ and $\mathcal{W}_I$ is indeed to perform a preliminary, possibly lossless, gross dimensional reduction. Apart from this criterion, the choice of the dimension of the latent space is arbitrary, since we do not preliminarily know the ID of the data manifold. The dimensions of the two
latent spaces for images and sinograms are set to be equal.

\subsubsection{Embedding Radon and inverse Radon networks}

The two subnetworks  $\widetilde{\mathcal{R}}$ and $\widetilde{\mathcal{R}}^{-1}$ are composed by 3 blocks of fully connected layer linked together by batch normalization layers to speed up and stabilize the training, as illustrated in \Cref{fig:radon_architecture}.

\subsubsection{Mixtures of VAEs}

The general architecture for the mixtures of VAEs is the same of \cite{ManifoldVAEs}. The VAEs of the mixture are composed by 4 coupling blocks as discussed in \Cref{subsec:inns}, each one built with 3 fully connected layers.

\begin{figure}[b!]
\begin{center}
	\begin{subfigure}[t]{0.5\textwidth}
		\centering
		\includegraphics[width=1.2\linewidth]{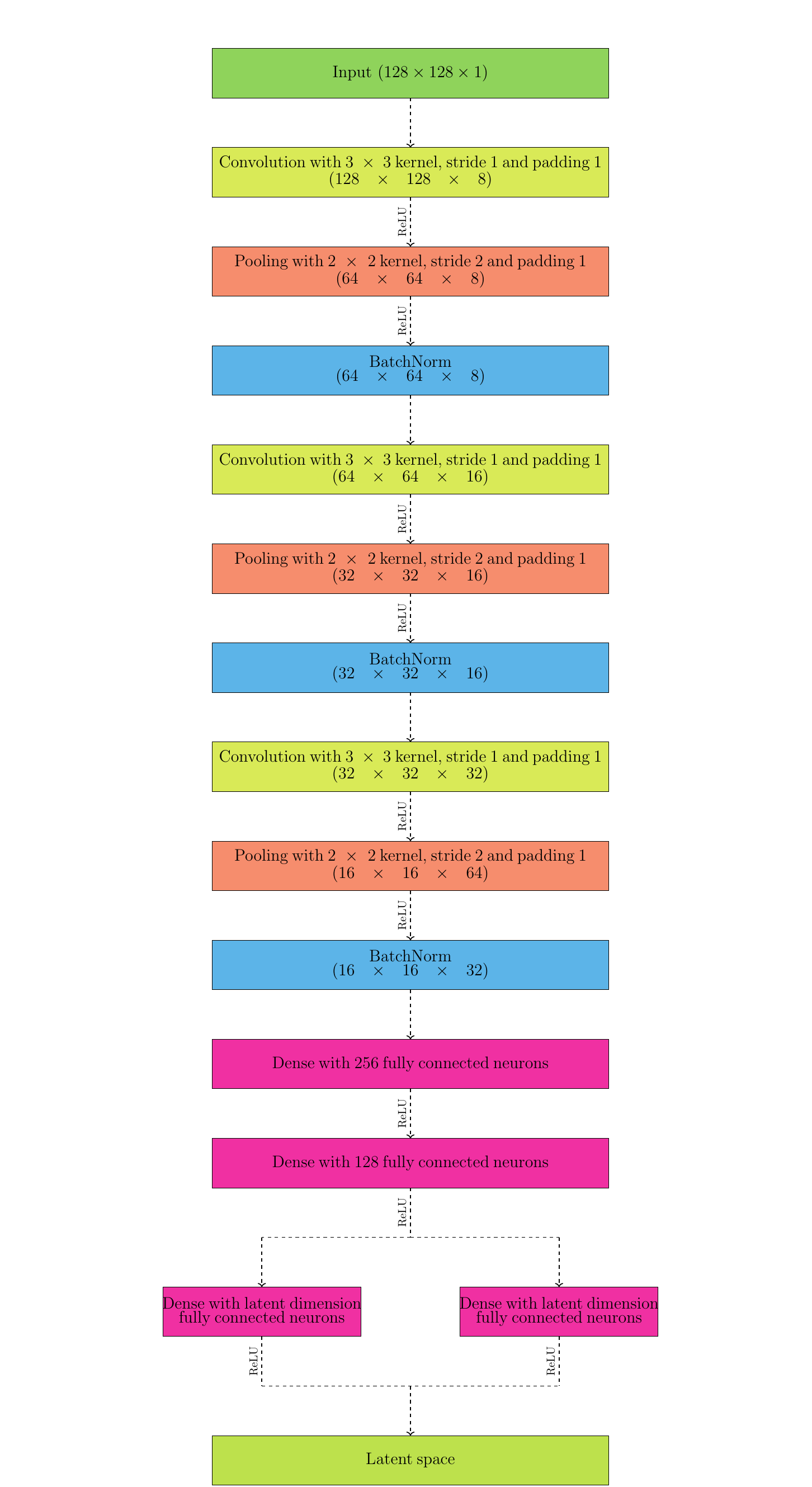}
		\caption{}
		\label{fig:null_deformations_a}
	\end{subfigure}%
	\begin{subfigure}[t]{0.35\textwidth}
		\centering
		\includegraphics[width=1.\linewidth]{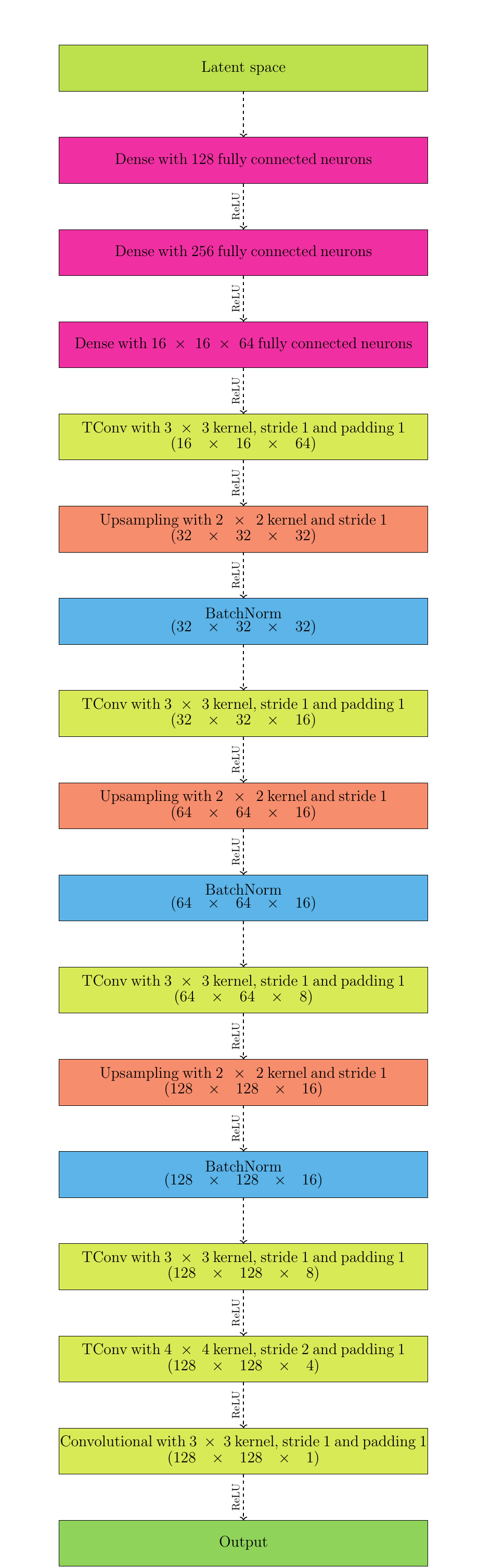}
		\caption{}
		\label{fig:null_deformations_b}
	\end{subfigure}
\end{center}
\caption{Structure of the encoder (a) and of the decoder (b) of the embedding subnetworks.}
\label{fig:embedding_architecture}
\end{figure}

\newpage

\begin{figure}[!b]
\begin{center}
\includegraphics[scale=.4]{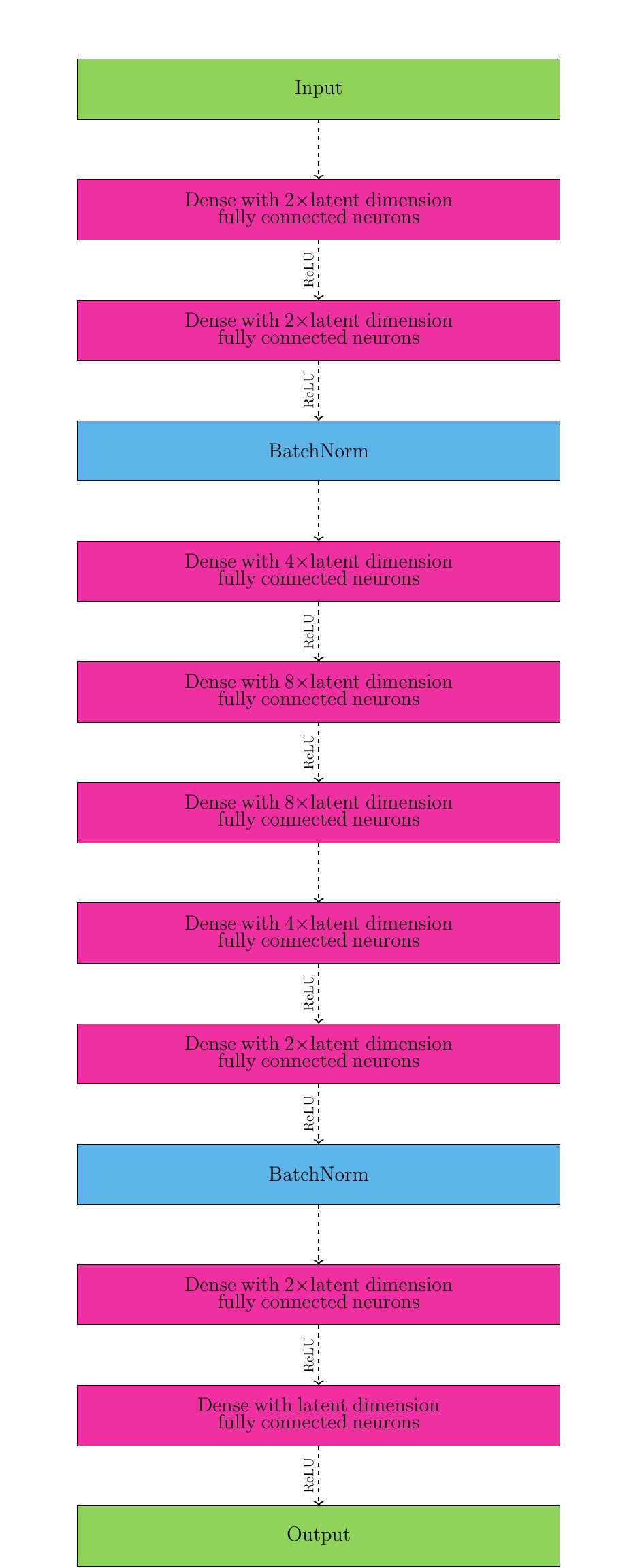}
\caption{Structure of the fully connected network employed for $\widetilde{\mathcal{R}}$ and $\widetilde{\mathcal{R}}^{-1}$.}
\label{fig:radon_architecture}
\end{center}
\end{figure}

\end{document}